\newtheorem{lemma}{Lemma}
\newtheorem{theorem}{Theorem}
\newtheorem{remark}{Remark}
\newtheorem{definition}{Definition}
\newtheorem{assumption}{Assumption}
\newcolumntype{P}[1]{>{\centering\arraybackslash}p{#1}}
\icmltitlerunning{Hybrid-Order Distributed SGD to Balance Communication Overhead, Computational Complexity, and Convergence Rate}
\begin{document}

\twocolumn[
\icmltitle{A Hybrid-Order Distributed SGD Method for Non-Convex Optimization to Balance Communication Overhead, Computational Complexity, and Convergence Rate}



\icmlsetsymbol{equal}{*}

\begin{icmlauthorlist}
\icmlauthor{Naeimeh Omidvar}{ipm}
\icmlauthor{Mohammad~Ali Maddah-Ali}{nokia}
\icmlauthor{Hamed Mahdavi}{ce_sharif}
\end{icmlauthorlist}

\icmlaffiliation{ipm}{School of Computer Science, Institute for Research in Fundamental Sciences (IPM), Tehran, Iran (email: nomidvar@connect.ust.hk).}
\icmlaffiliation{nokia}{Nokia Bell Labs (email: mohammad.maddah-ali@nokia-bell-labs.com).}
\icmlaffiliation{ce_sharif}{Department of Computer Engineering, Sharif University of Technology, Tehran, Iran (email: hmdmahdavi@ce.sharif.edu)}


\icmlkeywords{Machine Learning, ICML}

\vskip 0.3in
]



\printAffiliationsAndNotice{}  

\begin{abstract}

In this paper, we propose a method of distributed stochastic gradient descent (SGD), with low communication load and computational complexity, and still fast convergence. To reduce the communication load, at each iteration of the algorithm, the worker nodes calculate and communicate some scalers, that are the directional derivatives of the sample functions in some \emph{pre-shared directions}. However, to maintain accuracy, after every specific number of iterations, they communicate the vectors of stochastic gradients. To reduce the computational complexity in each iteration, the worker nodes approximate the directional derivatives with zeroth-order stochastic gradient estimation, by performing just two function evaluations rather than computing a first-order gradient vector. The proposed method highly improves the convergence rate of the zeroth-order methods, guaranteeing order-wise faster convergence. Moreover, compared to the famous communication-efficient methods of model averaging (that perform local model updates and periodic communication of the gradients to synchronize the local models), we prove that for the general class of non-convex stochastic problems and with reasonable choice of parameters, the proposed method guarantees the same orders of communication load and convergence rate, while having order-wise less computational complexity. Experimental results on various learning problems in neural networks applications demonstrate the effectiveness of the proposed approach compared to various state-of-the-art distributed SGD methods.

\end{abstract}

\section{Introduction}

Stochastic gradient descent (SGD) is an optimization tool 
 which is widely used  for solving many machine learning problems, 
due to its simplicity 
 and acceptable empirical performance 
  \cite{rakhlin2011making, bottou2010large}. 
 It iteratively updates the model parameters at the opposite direction of the stochastic gradient of the cost function. 
With the emergence of large-scale machine learning models such as deep neural networks (DNNs), however, centralized deployment of SGD has become intractable in its memory and time requirements. 
As such, and accelerated with the recent advances in multi-core parallel processing technology, distributed deployment of SGD 
 has become a trend, which trains machine learning models 
 on multiple computation nodes (a.k.a., worker nodes) in parallel, to enhance the scalability of the training procedure   \cite{meng2019convergence, zinkevich2010parallelized}. 
Under a distributed SGD method, at each iteration, each 
worker node evaluates a stochastic estimation of the gradient of the objective 
function over a randomly chosen/arrived data sample. The workers  
then broadcast their gradient updates 
to their peers, and aggregate 
 the gradients to update the model in parallel. 

Some of the major factors in designing distributed SGD algorithms, which are in conflict with each other and need to be addressed carefully, can be identified as follows:

\textbf{1- Communication overhead:} 
Distributed implementation of SGD algorithms mainly suffers from high 
level of communication overhead due to exchanging  
the stochastic gradient vectors among the worker nodes at each iteration, especially when the dimension of the model is large.  
In fact, the communication overhead has been observed to be 
the major bottleneck in scaling distributed SGD, and the time required to exchange this information will increase the overall time required for the algorithm to converge \cite{zhou2019distributed, alistarh2017qsgd, strom2015scalable, chilimbi2014project}.  
As such, alleviating the communication cost has recently gained lots of attention from different research communities. 

To address the issue of communication overhead, two main approaches can be identified 
in the literature: 1- 
Reducing the number of rounds of communication among the worker nodes by 
allowing them to perform local model updates at each iteration, and limiting their synchronization 
to a periodic exchange of local models after some 
iterations. This approach is often referred to as model averaging \cite{mcdonald2010distributed, zinkevich2010parallelized, zhang2016parallel, su2015experiments, zhang2015deep, povey2014parallel, kamp2018efficient, wang2018cooperative, zhou2017convergence}, 2- Reducing the number of bits communicated among the worker nodes 
by compression or quantization of the gradient vectors \cite{alistarh2017qsgd, wen2017terngrad, dean2012large, abadi2016tensorflow, yu2014introduction, de2015taming}. This approach  propagates the quantization error through the iterations, reduces the convergence rate, and would add to the computational complexity of each iteration.

\textbf{2- Computation load:} In many practical applications, the worker nodes are generally commodity devices with limited computational resources. 
Therefore, in order to guarantee a scalable solution, 
the distributed algorithm should essentially 
impose  as low computational cost 
per worker node as possible. 

Note that in general, one of the major derives for the computational load in SGD algorithms is the calculation of the first-order gradients at each iteration, which is very expensive, if not impossible, to obtain in many real-world problems (especially for learning large-scale models) \cite{hajinezhad2017zeroth}. In many applications, calculating  
the stochastic gradient requires extensive simulation for each worker node, where the complexity of each simulation may require significant computational time \cite{fu2015stochastic}. 
Moreover, in many scenarios of training 
DNNs,  
there is a highly complicated relationship between the 
parameters of the model and the objective function, so that deriving explicit form of the gradient is impossible and computing the gradient at each iteration costs significantly high computational complexity for the worker node \cite{lian2016comprehensive}.{\footnote{It is noted that although 
under fast differentiation techniques, 
the gradient of the sample function in neural network can be derived with less complexity, there exist some restrictions for applying such technique to a general neural network. In particular, such techniques require to store the results of all the intermediate computations, which is 
impractical 
for many scenarios due to memory limitations \cite{nesterov2017random}.}} 

We note that a promising approach to reduce the computational load of SGD methods is utilizing zeroth-order (ZO) gradient estimations instead of deriving the first-order gradients. 
In fact, since calculating a ZO gradient requires just two function evaluations \cite{sahu2019towards, ji2019improved}, it highly reduces the computational cost compared to  
the calculation of a first-order gradient (which imposes $ \mathcal{O} \left(d\right) $ times higher  computational complexity in general \cite{nesterov2017random}).  
Furthermore, by using pre-shared seeds for the generation of the random directions involved in zeroth-order gradients calculation, the worker nodes will not need to communicate the whole ZO gradient vectors, and can send just the scalar values of the  computed 
derivative approximations, as will be shown in Section \ref{sec: proposed method}. As such, the communication overhead is significantly reduced as compared to the communication overhead imposed by sending the whole gradient vectors, 
especially when the dimension of the problem is large. 
However, loosing accuracy in calculating the gradients directly reflects in the convergence rates of zeroth-order SGD methods, especially for non-convex problems (e.g., see the convergence rates comparison in Table \ref{tab: comparison}). This is one of the main 
challenges that we overcome in this work. 

\textbf{3- Convergence rate:} 
Finally achieving fast convergence to the solution of a problem is essentially desired, especially for large-scale problems dealing with huge datasets. 

\subsection{Main Contributions}

In this work, we propose a distributed 
optimization method that 
strikes a better balance between communication efficiency, computation efficiency and accuracy, compared to various distributed methods. The proposed methods enjoys the low computational complexity and communication overhead of the zeroth-order gradient calculation, and at the same time, benefits from periodic first-order gradient calculation and model updating.  
We theoretically prove that by proper combining of these two, we can achieve 
very good 
convergence rate as well. 
The main contributions of this work can be summarized as follows:

\begin{center}

\begin{table*}[t]\caption{Comparison of the proposed method to 
various state-of-the-art methods in the literature.}
\label{tab: comparison}
\centering
{\small{
\begin{tabular}
{|p{4.3 cm}||p{2.5 cm}|P{2.4 cm}|P{2.2 cm}|p{3.5
 cm}|} 
    \hline
 {Method}& {Convergence Order}   & {Communication Load per Iteration}& {Normalized Computational Load} & {Comments} \\  \hline\hline
  Proposed & $ \mathcal{O} (\frac{d}{\sqrt{mN}} ), \text{If}~\tau > 1   $ {\small{$ \mathcal{O} ( \frac{1}{\sqrt{mN}} ), \text{If}~\tau = 1  $}}
     & $ \frac{\tau -1+d}{\tau} $ & $  \simeq \frac{1}{\tau}+\frac{1}{d}  $  &  \\ \hline
  RI-SGD 
   \cite{haddadpour2019trading} 
   & $ \mathcal{O} (\frac{\tau}{\sqrt{mN}}  ) $ & $ \frac{d}{\tau} $  & $ \mu m + 1 $ & requires high storage, $ \mu $: redundancy factor  \\ \hline
  syncSGD 
   \cite{wang2018cooperative} & $  \mathcal{O}  ( \frac{1}{\sqrt{mN}}  )  $  & $ d  $  & $ 1 $   &   \\ \hline   
  ZO-SGD 
   \cite{sahu2019towards} & $ \mathcal{O}  (\frac{ (d/m )^{1/3}}{ ( N  )^{1/4}}  ) $  & $ 1 $ & $  \simeq \frac{1}{d} $    &   \\ \hline
 ZO-SVRG-Ave 
 \cite{liu2018zeroth} & $ \mathcal{O}  (\frac{d}{N}+\frac{1}{ \min{\{d,m\}}}  ) $  & $ 1 $ & $ \mathcal{O} (\frac{K}{d}) $   &  requires dataset storage, $ K $: size of dataset   \\ \hline  
QSGD 
 \cite{alistarh2017qsgd}  &  $ \mathcal{O}  (\frac{1}{N}+\sqrt{d}  ) $ 
 &  $ \mathcal{O}  (s^2 + s \sqrt{d}  ) $  &  $ 
 > 1 $   & $ s $: num. of quantization levels  \\ \hline  
\end{tabular}
}}
\end{table*} 

\end{center}

\begin{itemize}

\item We develop a new distributed SGD method, with low communication overhead and computational complexity, and yet fast convergence. To reduce the communication load, at each iteration of the algorithm, the worker nodes calculate and communicate some scalers, that are the directional derivatives of the sample functions in some \emph{pre-shared directions}. To reduce the computational complexity, the worker nodes approximate the directional derivatives with zeroth-order stochastic gradient estimation, by performing just two function evaluations.  
Finally, to alleviate the  approximation error of the zeroh-order stochastic gradient estimations, 
after every $ \tau  \in \mathbb{N}$ iterations, the worker nodes compute and communicate the first-order stochastic gradient vectors.

\item 
We provide theoretical analyses for the convergence rate guarantee of the proposed method. 
In particular, we prove that for the general class of non-convex stochastic problems, the proposed scheme converges to a stationary point with the rate of $ \mathcal{O} ( d/\sqrt{mN} ) $, which highly outperforms the convergence rates  of zeroth-order methods (e.g., see \cite{sahu2019towards, liu2018zeroth}). 
Moreover, compared to the fastest first-order communication-efficient algorithms (e.g., 
the model averaging scheme in \cite{haddadpour2019trading} with the convergence rate of $ \mathcal{O} ( \frac{\tau}{\sqrt{m N}}  ) $), the proposed algorithm exhibits the same convergence rate in terms of both the number of iterations and the number of worker nodes.  Finally, by a reasonable choice of  $ \tau = \mathcal{O} \left( d \right) $, we can guarantee 
the same-order communication load and convergence rate as in the fastest converging 
model averaging methods, with order-wise less computational complexity (see Table \ref{tab: comparison}).

\item Due to utilizing zeroth-order stochastic gradient updates, the proposed algorithm exhibits a sufficiently low level of computational complexity, which is 
 much lower than the one in the communication-efficient methods including the model averaging schemes (e.g., \cite{haddadpour2019trading, wang2018cooperative}) or the schemes with gradients compression (e.g., \cite{alistarh2017qsgd}), and is comparable to the one in the existing zeroth-order stochastic optimization algorithms (e.g., \cite{sahu2019towards, liu2018zeroth}). 
As a baseline, computational complexity of the proposed method is $ \mathcal{O} \left( 1/\tau+1/d \right) $ times the computational complexity of the model averaging schemes.

\item 
Due to utilizing pre-shared seeds for the generation of random directions, at the iterations with zeroth-order gradient updates, the worker nodes just need to communicate a scalar rather than a $ d $-dimensional vector, 
which drastically reduces the number of bits required for communication among the worker nodes.  
In particular,  
the proposed algorithm exhibits a communication load equal to sending  
$ d + \tau -1 $ scalar values by each worker node per  $ \tau $ iterations. This communication load is much lower than that of 
fully synchronous SGD (syncSGD)  \cite{wang2018cooperative}, and is comparable to that of the communication-efficient schemes with model averaging, with a ratio of $ \left( 1+\tau/d \right) $.

\item Using numerical experiments, we empirically demonstrate the accuracy and convergence properties of the proposed method compared to various  state-of-the-art baselines. 

\end{itemize}
 
Table \ref{tab: comparison} summarizes  
the comparison of the proposed method to the most related works in the literature,  
in terms of convergence rate, communication overhead per iteration, and computational load per iteration normalized to the computational complexity of  
computing a first-order stochastic gradient.

\section{Related Work}

As mentioned before, communication bottlenecks in distributed SGD originate from two sources that can be significant; first, the number of bits communicated at each communication round, and second, the number of rounds of communication. 
To tackle the first barrier, the existing works mainly try to quantize and compress the gradient vectors before communicating them \cite{alistarh2017qsgd, wen2017terngrad, dean2012large, abadi2016tensorflow, yu2014introduction, de2015taming}. 
Although those methods are generally effective in reducing the number of bits communicated at each iteration, 
their quantization error increases the error variance of the communicated gradient vectors, leading to a slow convergence. 
To tackle the second  
source of communication overhead, 
the idea of model averaging has been proposed, where  the worker nodes perform local updates at each iteration, and communicate their updated 
models after every $ \tau $ iterations 
 to periodically synchronize them
 \cite{mcdonald2010distributed, zinkevich2010parallelized, zhang2016parallel, su2015experiments, zhang2015deep, povey2014parallel, kamp2018efficient, wang2018cooperative, zhou2017convergence}. In consequence, the number of rounds of communication is significantly reduced.

There exist various model averaging schemes in the literature. 
\cite{mcmahan2016communication}  
investigated periodic averaging SGD (PA-SGD), where the models are averaged across the nodes afters every $ \tau $ local updates. 
\cite{wang2018cooperative} 
 demonstrated that the convergence error of PA-SGD grows linearly in the number of local updates $ \tau $. \cite{yu2019parallel} provided some theoretical studies on 
why SGD with model averaging works well.  
Moreover, \cite{zhang2015deep} 
proposed elastic averaging SGD (EASGD), where 
a more complicated averaging is done by encompassing 
 an auxiliary variable 
to allow some slackness between the models. 
They empirically validated the effectiveness of EASGD, without providing any convergence analysis. 
Furthermore,  \cite{jiang2017collaborative, lian2017can} 
 proposed consensus-based distributed SGD (D-PSGD), in which worker nodes 
 synchronize their local models only with their neighboring nodes. 
By incorporating extra memory as well as high storage, \cite{haddadpour2019trading} infuse redundancy to the training data to further reduce the residual error in local averaging, and improve the convergence rate. 
However, this method needs more storage per worker node, and the data at different worker nodes are overlapping. 

Finally, note that a key feature for both the aforementioned 
communication-efficient approaches is that they require  
stochastic first-order gradient information at each iteration (via subsequent calls to a stochastic first-order oracle (SFO)), in order to guarantee the convergence. However, as illustrated before, 
 in many real-world applications and scenarios, 
it may be computationally costly 
 for the worker nodes to obtain such information at every iteration \cite{hajinezhad2017zeroth}. 
Therefore, although the approaches mentioned above 
provide communication-efficient solutions to distributed learning, 
their computational complexity may not be 
tolerable for general commodity worker nodes, 
and restricts the applicability of such methods, in practice. 

To reduce the computational load of SGD methods, the idea of zeroth-order gradient estimations can be utilized.  
This idea has been widely used for gradient-free optimization where the explicit expressions of gradients of the objective function are expensive or infeasible to obtain, and only function evaluations are accessible \cite{sahu2019towards, ji2019improved}. As earlier discussed in the Introduction section, ZO gradient estimation can highly reduce the communication overhead as well as the computational complexity. However, the aforementioned benefits of zeroth-order SGD methods   
comes at the cost of significantly deteriorated convergence rates. 
This is because a zeroth-order gradient is in fact a biased estimation of the true gradient, and the involved approximation error leads to a high residual error and consequently,  
inferior convergence rates in zeroth-order SGD algorithms 
\cite{nesterov2017random}. 
For example, for general non-convex problems, the (centralized) zeroth-order SGD algorithm proposed in  \cite{sahu2019towards} has a 
convergence rate of $ \mathcal{O} (  \frac{ d^{1/3} }{ m^{1/3} N^{1/4} } ) $ (where  $ m $  denotes the number of sampled directions at each iteration of the algorithm, and hence, can be considered as equivalent to the number of worker nodes for a distributed setting). 

By incorporating multiple restarts and extra memory, \cite{liu2018zeroth} proposed a zeroth-order extension of stochastic variance reduced gradient method (SVRG) and proved that their proposed zeroth-order SVRG (ZO-SVRG-Ave) achieves a convergence rate of 
$ \mathcal{O}  (\frac{d}{N}+\frac{1}{ \min{\{d,m\}}}  ) $ 
for non-convex problems, in which the second term 
highly deteriorates the convergence performance of the algorithm. 
To eliminate this error term, they proposed a coordinate-wise version of ZO-SVRG, but it costs $ \mathcal{O} (d) $ times more function evaluations 
per iteration, which causes high computational complexity, especially for high-dimensional problems. 
Moreover, both of their proposed ZO-SVRG methods required full dataset storage, which is not affordable for distributed deployment on commodity devices. 
Furthermore, \cite{gao2018information} proposed a zeroth-order version of the stochastic alternating direction method of multipliers with a guaranteed  convergence rate of $ \mathcal{O} (1/\sqrt{N}) $ for convex problems but cannot be applied to general non-convex problems. 
We note that the slow convergence of zeroth-order stochastic gradient methods, especially for non-convex problems, is one of the main challenges that we address in this paper.

\section{Communication-Efficient Distributed Stochastic Optimization Algorithm}\label{sec: proposed method}

\subsection{Problem Definition}

Consider a setting with $ m $ distributed worker nodes that are interested in solving the following non-convex stochastic optimization problem in a distributed manner:
\begin{equation}\label{eq: prob formulation}
\min_{\boldsymbol{x} \in \mathbb{R}^d } f \left( \boldsymbol{x} \right) = \min_{\boldsymbol{x} \in \mathbb{R}^d} \mathbb{E}_{\boldsymbol{\zeta}} \left[ F \left( \boldsymbol{x} , \boldsymbol{\zeta} \right) \right], 
\end{equation}
where $ f  $ is a generic non-convex loss function, and $ \boldsymbol{\zeta} $ is a 
random variable with unknown distribution. 
As a special case of the above 
formulation,  
consider the following non-convex  
sum 
problem  
which appears in a variety of machine learning applications, ranging from generalized linear models to deep neural networks: 
\begin{equation}\label{eq: prob form 2}
\min_{\boldsymbol{x} \in \mathbb{R}^d } f \left( \boldsymbol{x} \right) = \min_{\boldsymbol{x} \in \mathbb{R}^d} \dfrac{1}{K}\sum_{k=1}^K F \left( \boldsymbol{x} , \boldsymbol{\zeta}_k \right) , 
\end{equation}
in which $ F \left( \boldsymbol{x} , \boldsymbol{\zeta}_k \right) $ is the  
training loss over the
sample $ \boldsymbol{\zeta}_k $, 
 and $ K $ is the total number of samples. 
Note that for generality, in the rest of this paper, we focus on the problem formulation \eqref{eq: prob formulation}. 
However, our analyses apply equally to both objectives.

\subsection{The Proposed Algorithm}

In the proposed algorithm, named as Hybrid-Order Distributed SGD (HO-SGD), at each iteration $ t $, each worker node $ i $ 
samples or receives data sample $ \boldsymbol{\zeta}_{t+1,i} $ identically and independently from the dataset. 
Then, the data sample is used to evaluate 
a stochastic zeroth-order gradient 
approximation 
via computing a finite difference using two 
function queries 
as follows \cite{liu2018zeroth, gao2018information}. 
\begin{align}
\tilde{\boldsymbol{G}}_{t,i} \hspace{-1 pt} = \hspace{-1 pt}  
\dfrac{d}{\mu} \hspace{-2 pt} \Bigg[ \hspace{-1 pt} F \hspace{-1 pt} \left( \boldsymbol{x}^t \hspace{-1 pt} + \hspace{-1 pt} \mu \boldsymbol{v}_{t+1,i} , \boldsymbol{\zeta}_{t+1,i} \right) \hspace{-3 pt} - \hspace{-3 pt} F \hspace{-1 pt} \left( \boldsymbol{x}^t , \boldsymbol{\zeta}_{t+1,i} \right) \hspace{-3 pt} \Bigg] \boldsymbol{v}_{t+1,i}, \notag 
\end{align}
where $ d $ is 
the dimension of the model to be learned, $ \mu > 0 $ is a smoothing parameter, and $ \boldsymbol{v}_{t+1,i} $ is a  
direction randomly drawn from a uniform distribution over a unit sphere. 

Next, each worker node $ i $ 
communicates its computed zeroth-order gradient to the other nodes. For this purpose, note that  
the 
direction of the derivatives, i.e., the vectors $ \boldsymbol{v}_{t+1,i} , ~ \forall i $, are some randomly generated directions, where the seeds are pre-shared among the nodes before optimization. As such, each worker node $ i $ does not need to send the vector and just needs to send  the value of the finite-difference approximated directional derivative, i.e., the scalar $ \dfrac{d}{\mu} \left[ F\left( \boldsymbol{x}^t + \mu \boldsymbol{v}_{t+1,i} , \boldsymbol{\zeta}_{t+1,i} \right) - F\left( \boldsymbol{x}^t , \boldsymbol{\zeta}_{t+1,i} \right) \right] $.  
Therefore, instead of communicating a $ d $-dimensional gradient vector, each node  communicates just a single scalar. As a result, the communication load reduces to $ d $ times less than the communication load of exchanging the 
stochastic gradient vector. 
The model is then updated by each worker node in parallel, using the average of the local zeroth-order gradients of all the nodes, as shown by \eqref{eq: update G tile t}-\eqref{eq: update x}.

Finally, after every $ \tau -1 $ iterations, the worker nodes perform one iteration of first-order stochastic gradient computation and communication, and update the model accordingly. 
The pseudo-code of the proposed algorithm is shown in Algorithm \ref{alg: proposed alg}. 
We note that the introduced algorithm does not necessarily assume 
that each worker node has access to 
the entire data. 
Rather, as long as each data sample is assigned to each worker node uniformly at random, Algorithm \ref{alg: proposed alg} works, and all the  results apply.

\subsection{Discussion on the Proposed Algorithm and Comparison to the Related Works}  

In the following, we review some remarks regarding  Algorithm \ref{alg: proposed alg}. 
First, note that if we choose   $ \tau=1 $, Algorithm \ref{alg: proposed alg} reduces to fully synchronous 
distributed SGD method  \cite{wang2018cooperative, dekel2012optimal}, where the workers perform first-order gradient computation and communication at all iterations. Moreover, if we consider $ \tau \geq N $, the workers always perform zeroth-order gradient updates, thereby  
the algorithm reduces to distributed zeroth-order stochastic gradient method. 
Therefore, as the two ends of its spectrum, the proposed algorithm encompasses both zeroth-order and first-order distributed SGD algorithms as special cases.

\begin{algorithm}[H] 
   \caption{Hybrid-Order Distributed SGD Algorithm}
   \label{alg: proposed alg}
\begin{algorithmic}
   \STATE {\bfseries Input:} Dimension $ d $, the total number of iterations $ N $, the number of workers $ m $, period $ \tau \in \mathbb{N} $, smoothing parameter $ \mu $, batch size $ B $, initial point $ \boldsymbol{x}^0 $, step-size rule $ \left\{ \alpha_t \right\}_{t=0}^{N-1} $. 
   
   \FOR{$ t=0,\ldots, N-1 $ }
   \STATE \textbf{parallel for} $ i=1,\ldots, m $ \textbf{do}
   \STATE \hspace{10 pt} $i$-th worker receives a batch of i.i.d. 
   samples $ \left \lbrace \boldsymbol{\zeta}_{t+1,i,b } \right \rbrace_{b=1,\ldots,B} $. 
   
   \hspace{10 pt}\IF{$ \mod(t ,\tau)=0 $} 
   
	\STATE \hspace{10 pt} $i$-th worker computes 
	the first-order 
	stochastic gradient vector, as follows: 
	\begin{align}\label{eq: update G_1 tile t i}
	\tilde{\boldsymbol{G}}_{t,i} \gets &  
	\dfrac{1}{B} \sum_{b=1}^B \nabla F \left( \boldsymbol{x}^t, \boldsymbol{\zeta}_{t+1,i ,b } \right)   
	\end{align}
   \vspace{-15 pt}
   \hspace{10 pt}\ELSE
   
   \STATE \hspace{10 pt} $i$-th worker picks a direction $ \boldsymbol{v}_{t+1,i} $ uniformly at random from the unit sphere, 
   and computes a zeroth-order stochastic gradient as follow:
   %
	\begin{align}\label{eq: update G_0 tile t i}
	\tilde{\boldsymbol{G}}_{t,i} \gets  
	\dfrac{1}{B} \sum_{b=1}^B \dfrac{d}{\mu}   \Big[  & F\left( \boldsymbol{x}^t + \mu \boldsymbol{v}_{t+1,i} , \boldsymbol{\zeta}_{t+1,i ,b } \right)  \notag \\
	& - F\left( \boldsymbol{x}^t , \boldsymbol{\zeta}_{t+1,i ,b } \right) \Big] \boldsymbol{v}_{t+1,i}   	 
	\end{align}
   
   \vspace{-10 pt}
   \hspace{10 pt}\ENDIF
   \vspace{-15 pt}
   
   \STATE 
   \begin{equation}\label{eq: update G tile t}
   \tilde{\boldsymbol{G}}_t = \dfrac{1}{m} \sum_{i=1}^m \tilde{\boldsymbol{G}}_{t,i}
   \end{equation}
   \begin{equation}\label{eq: update x}
   \boldsymbol{x}^{t+1} = \boldsymbol{x}^t - \alpha_t \tilde{\boldsymbol{G}_t}
   \end{equation}

   \STATE \textbf{end parallel for}
   \ENDFOR
   
	\STATE {\bfseries Output:} $ \boldsymbol{x}^N $.
\end{algorithmic}
\end{algorithm}

To address the challenge of slow convergence of zeroth-order SGD iterations, the proposed method employs periodic rounds of first-order stochastic gradient updates. This  significantly reduces the residual error of the zeroth-order stochastic gradient updates, as compared to the existing zeroth-order methods. 
Therefore, our algorithm can be viewed as a zero-order stochastic optimization with periodic rounds of first-order updates, 
which reduces the communication overhead and  computational complexity, and at the same time guarantees  improved convergence rate (as will be seen in the next section). 
In particular, the main advantages of the proposed algorithm can be identified as follows:

\textbf{Low computational complexity:} By employing 
zeroth-order stochastic gradient estimations, each worker node 
performs just two function evaluations,  rather than a  
gradient computation. 
 This contributes to a significant reduction in the computational complexity of the proposed algorithm, as compared to the previous first-order distributed methods. 
In particular, it is estimated that in general, computing a zeroth-order gradient estimation costs $ \mathcal{O} \left( d \right) $ times less computational load than computing a first-order gradient estimation \cite{nesterov2017random}. 
Therefore, considering $ \tau -1 $ iterations of zeroth-order update and one iteration of first-order update at each period of $ \tau $ iterations of  Algorithm \ref{alg: proposed alg}, the computational complexity of the proposed algorithm is extremely lower than that of the first-order 
 communication-efficient methods, 
with a ratio of $ \mathcal{O} \left( \frac{1}{d}+\frac{1}{\tau} \right) $, and is comparable to the one in the distributed zeroth-order methods.

\textbf{Low communication overhead:} As aforementioned, due to utilizing pre-shared seeds for the generation of random directions, at the iterations with zeroth-order gradient updates, the worker nodes just need to communicate a scalar instead of a $ d $-dimensional vector, 
which drastically reduces the number of bits required for communication. 
In particular, 
the communication overhead of the proposed scheme is $ \left( 1 + \frac{\tau-1}{d} \right) $ times the communication overhead of the fastest first-order communication-efficient methods (i.e., model-averaging schemes such as \cite{haddadpour2019trading}), and is comparable to the communication overhead in the zeroth-order methods. 
 
\textbf{Fast convergence rate:} Finally, 
the proposed algorithm guarantees a fast convergence rate for a general class of non-convex problems. As will be shown in the next section, in terms of the number of iterations and the number of workers, the proposed method guarantees the same convergence rate as those of the fastest first-order communication-efficient methods, and order-wisely better convergence rate than those of the zeroth-order methods.

\section{Convergence Analysis} \label{sec: convergence analysis}

In this section, we present the convergence analysis of the proposed algorithm for a general class of non-convex problems.  
Prior to that, we first state the  
main assumptions and definitions used for the convergence analysis.

\subsection{Assumptions and Definitions}

Our convergence analysis is based on the following assumptions, which are all standard and 
widely used in the context of non-convex optimization \cite{meng2019convergence}. 

\begin{assumption}[Unbiased and finite variance first-order stochastic gradient estimation]\label{assump: unbiased 1st grad estimation + bounded var}
The stochastic gradient evaluated on each data sample $ \boldsymbol{\zeta} $ by the first-order oracle is an unbiased estimator of the full (exact) gradient, i.e., 
\begin{equation}\label{eq: 1st grad unbiased}
\mathbb{E} \left[ \nabla F \left( \boldsymbol{x},\boldsymbol{\zeta} \right) \right] = \nabla f \left( \boldsymbol{x} \right), ~~\forall \boldsymbol{x} \in \mathbb{R}^d, 
\end{equation} 
with a finite variance, i.e., there exists a constant $ \sigma \leq 0 $ such that 
\begin{equation}\label{eq: 1st grad bounded var}
\mathbb{E} \left[ \parallel \nabla F \left( \boldsymbol{x},\boldsymbol{\zeta} \right) - \nabla f \left( \boldsymbol{x} \right) \parallel^2  \right] \leq \sigma^2 , ~~\forall \boldsymbol{x} \in \mathbb{R}^d.  
\end{equation} 
\end{assumption}

\begin{assumption}[Lipschitz continuous and bounded gradient]\label{assump: Lipschitz grad + bounded grad}
The objective function $ isf \left( \boldsymbol{x} \right) $ is differentiable and $ L $-smooth, i.e., its gradient $ \nabla f $ is $ L $-Lipschitz continuous: 
\begin{equation}\label{eq: Lips grad}
\parallel \nabla f \left( \boldsymbol{x} \right) - \nabla f \left( \boldsymbol{y} \right) \parallel \leq L \parallel  \boldsymbol{x} - \boldsymbol{y} \parallel, ~~~\forall \boldsymbol{x},\boldsymbol{y} \in \mathbb{R}^d .  
\end{equation} 
Moreover, the norm of the gradient of the objective function is bounded, i.e., there exists a constant $ M \leq 0 $ such that 
\begin{equation}\label{eq: bounded grad}
\parallel \nabla f \left( \boldsymbol{x} \right) \parallel \leq M , \quad \forall \boldsymbol{x} \in \mathbb{R}^d.  
\end{equation} 
\end{assumption}

\begin{assumption}[Bounded bellow objective function value]\label{assump: lower bound f*}
The objective function value is bounded below by a scalar $ f^\ast $.
\end{assumption}

\subsection{Main Results}

First, note that since  $ f \left( \boldsymbol{x} \right) $ is non-convex, we need a proper measure to show  
 the gap between the the output of the algorithm 
and the set of stationary solutions. 
As such and similar to the previous works, we consider the expected gradient norm as an indicator of convergence, 
and state the algorithm achieves an $ \epsilon $-suboptimal solution if 
\cite{bottou2018optimization}:  
\begin{equation}
\mathbb{E} \left[ \dfrac{1}{N} \sum_{t=1}^N \parallel \nabla f \left( \boldsymbol{x}^t \right) \parallel^2 \right] \leq \epsilon.
\end{equation} 
Noted that this condition guarantees convergence of the algorithm to a stationary point \cite{wang2018cooperative}. 

The following theorem presents the main convergence result of the proposed algorithm. 
Note that all the intermediate results and analytical proofs have been deferred to Appendix \ref{sec: app proofs}. 

\begin{theorem}[Convergence of the proposed method]\label{th: main}
In the proposed Algorithm 1, under Assumptions \ref{assump: unbiased 1st grad estimation + bounded var}-\ref{assump: lower bound f*}, if the step-size (a.k.a., the learning rate) and the smoothing parameter are chosen 
such that $ \alpha_t = \frac{\sqrt{ B m}}{L \sqrt{N}}, ~ \forall t $ and $ \mu \leq \frac{1}{\sqrt{dN}} $, respectively, 
and the total iterations $ N $ is sufficiently large, i.e., 
$ N > \frac{16 \left( d+ B m-1 \right)^2}{B m} $, 
then the average-squared gradient norm after $ N $ iterations is bounded by: 
\begin{align}\label{eq: error bound in the main theorem}
\dfrac{1}{N} & \sum_{t=0}^{N-1}  \mathbb{E} \left[ \parallel \nabla f \left( \boldsymbol{x}^{t} \right) \parallel^2 \right]   \\
& \leq  \frac{4 L \left( f \left( \boldsymbol{x}^0 \right) - f^\ast \right) }{\sqrt{ B m N}}  
+  \frac{ 2 \sigma^2 }{ \sqrt{ B mN} \tau} \notag \\
& \quad + 1 \left(\tau > 1 \right)  \Bigg( \frac{4 L^2}{ d^2 \sqrt{ B m N} \tau} 
+ \frac{4 L^2}{ d^2 N \sqrt{ B m N} } \notag \\
& \quad + \frac{L^2}{\sqrt{ B mN}} \frac{\tau-1}{\tau} 
+ \frac{ L^2 }{N \sqrt{ B mN} \tau}
+ \frac{4 d \sigma^2}{ \sqrt{ B m N} } \frac{\tau-1}{\tau}  \notag \\
& \quad   + \frac{4 d \sigma^2}{ N \sqrt{ B m N} \tau}  
+ \frac{L^2 }{ \sqrt{ B mN} } \dfrac{\tau-1}{\tau} 
+ \frac{ L^2 }{ N \sqrt{ B mN} \tau} \Bigg),
 \notag
\end{align} 
where $ \mathbf{1} (\cdot) $ is the indicator function. 

\end{theorem}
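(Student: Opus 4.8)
The plan is to run the standard non-convex SGD ``descent lemma'' argument, but split every iteration into the two regimes of Algorithm~\ref{alg: proposed alg} and absorb the zeroth-order variance blow-up into the step size. First I would use $L$-smoothness (Assumption~\ref{assump: Lipschitz grad + bounded grad}) together with the update rule \eqref{eq: update x} to get, for each $t$,
\begin{equation*}
f(\boldsymbol{x}^{t+1}) \le f(\boldsymbol{x}^{t}) - \alpha_t \langle \nabla f(\boldsymbol{x}^{t}), \tilde{\boldsymbol{G}}_t \rangle + \tfrac{L\alpha_t^2}{2}\,\|\tilde{\boldsymbol{G}}_t\|^2 ,
\end{equation*}
and then take the conditional expectation given $\boldsymbol{x}^t$, which reduces everything to controlling $\mathbb{E}[\tilde{\boldsymbol{G}}_t\mid\boldsymbol{x}^t]$ and $\mathbb{E}[\|\tilde{\boldsymbol{G}}_t\|^2\mid\boldsymbol{x}^t]$ in the two cases $t\bmod\tau=0$ and $t\bmod\tau\neq 0$.

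For a first-order iteration, Assumption~\ref{assump: unbiased 1st grad estimation + bounded var} together with independence of the $m$ workers and the $B$ samples yields $\mathbb{E}[\tilde{\boldsymbol{G}}_t\mid\boldsymbol{x}^t]=\nabla f(\boldsymbol{x}^t)$ and $\mathbb{E}[\|\tilde{\boldsymbol{G}}_t-\nabla f(\boldsymbol{x}^t)\|^2\mid\boldsymbol{x}^t]\le \sigma^2/(Bm)$, which produces the $\sigma^2/(\tau\sqrt{BmN})$ term. For a zeroth-order iteration I would invoke the standard properties of the ball-smoothed surrogate $\hat f_\mu(\boldsymbol{x}):=\mathbb{E}_{\boldsymbol{u}}[f(\boldsymbol{x}+\mu\boldsymbol{u})]$ (with $\boldsymbol{u}$ uniform in the unit ball): (i) $\mathbb{E}[\tilde{\boldsymbol{G}}_{t,i}\mid\boldsymbol{x}^t]=\nabla\hat f_\mu(\boldsymbol{x}^t)$; (ii) $\|\nabla\hat f_\mu(\boldsymbol{x})-\nabla f(\boldsymbol{x})\|$ and $|\hat f_\mu(\boldsymbol{x})-f(\boldsymbol{x})|$ are $\mathcal{O}(\mu L d)$ and $\mathcal{O}(\mu^2 L)$; and (iii) a second-moment bound of the form $\mathbb{E}[\|\tilde{\boldsymbol{G}}_{t,i}\|^2\mid\boldsymbol{x}^t]\le c_1 d\,\|\nabla f(\boldsymbol{x}^t)\|^2 + c_2 d\,\sigma^2 + c_3\mu^2L^2d^2$, which after averaging the $m$ independent workers and the $B$ samples contributes the characteristic $d\sigma^2/(Bm)$ variance. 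For the cross term I would use $\langle \nabla f,\nabla\hat f_\mu\rangle \ge \tfrac12\|\nabla f\|^2-\tfrac12\|\nabla\hat f_\mu-\nabla f\|^2$, which keeps $M$ out of the final bound and routes the gradient-bias into the $\mu$-dependent (hence, after substitution, $L^2$-over-powers-of-$d$) terms.

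Next I would sum the resulting conditional inequality over $t=0,\dots,N-1$, take total expectation, and use $f(\boldsymbol{x}^N)\ge f^\ast$ (Assumption~\ref{assump: lower bound f*}) to telescope the $f$-differences into $f(\boldsymbol{x}^0)-f^\ast$. Partitioning $\{0,\dots,N-1\}$ into the $\simeq N/\tau$ first-order indices and the $\simeq N(\tau-1)/\tau$ zeroth-order indices produces the $1/\tau$ factor in front of the $\sigma^2$ term, the $(\tau-1)/\tau$ factors in front of the zeroth-order bias and $d\sigma^2$ terms, and the indicator $\mathbf{1}(\tau>1)$. The term $\tfrac{L\alpha_t^2}{2}\mathbb{E}\|\tilde{\boldsymbol{G}}_t\|^2$ reproduces a multiple of $\|\nabla f(\boldsymbol{x}^t)\|^2$ on the right-hand side with coefficient $\mathcal{O}\!\big(\alpha_t^2 L (d+Bm)/(Bm)\big)$, while the cross term leaves $-\tfrac{\alpha_t}{2}\|\nabla f(\boldsymbol{x}^t)\|^2$ on the left; with $\alpha_t=\sqrt{Bm}/(L\sqrt N)$ the former is at most half the latter exactly when $N>16(d+Bm-1)^2/(Bm)$, which is where that hypothesis is consumed. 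Moving the residual $\tfrac14\sum_t\alpha_t\|\nabla f(\boldsymbol{x}^t)\|^2$ to the left, dividing by $N\alpha/4$, and finally substituting $\mu\le 1/\sqrt{dN}$ (so $\mu^2=1/(dN)$, $\mu^2 d^2=d/N$, etc.) turns every remaining term into the explicit expression \eqref{eq: error bound in the main theorem}.

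I expect the main obstacle to be the bookkeeping in this last step: one must track the second-moment bound of the \emph{averaged} zeroth-order estimator precisely enough that the $\|\nabla f(\boldsymbol{x}^t)\|^2$ coefficient it generates is dominated by the exact step size $\alpha_t=\sqrt{Bm}/(L\sqrt N)$ under the stated lower bound on $N$, and simultaneously keep the bias of $\hat f_\mu$ against $f$ consistent both in the inner-product term and in the telescoped $f$-differences. Getting the $d$-factors in those two places to match is what forces the final constants and the precise form of the $N$-condition.
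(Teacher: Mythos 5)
Your proposal is sound and would deliver the stated bound, but it takes a somewhat different route from the paper. The paper does not run the descent lemma on $f$ itself: it introduces the time-varying surrogate $\tilde f_t$, equal to $f$ at first-order iterations and to the smoothed function $f_\mu$ at zeroth-order iterations, applies $L$-smoothness to $\tilde f_t$, and exploits that $\tilde{\boldsymbol{G}}_t$ is exactly unbiased for $\nabla\tilde f_t(\boldsymbol{x}^t)$, so the cross term contributes $-\alpha_t\|\nabla\tilde f_t(\boldsymbol{x}^t)\|^2$ with no bias manipulation; the price is two extra lemmas, one bounding the surrogate switch $\tilde f_{t+1}-\tilde f_t\le\mu^2L/2$ at the $O(N/\tau)$ boundary iterations (which produces the $\mu^2 L N/\tau$ penalty, i.e.\ the $4L^2/(d\sqrt{BmN}\,\tau)$-type terms after substituting $\mu$), and one converting $\|\nabla\tilde f_t\|^2\ge\tfrac12\|\nabla f\|^2-\mu^2d^2L^2/4$. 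You instead keep $f$ throughout, telescope it directly to $f(\boldsymbol{x}^0)-f^\ast$ with no switching bookkeeping, and absorb the zeroth-order bias via $\langle\nabla f,\nabla\hat f_\mu\rangle\ge\tfrac12\|\nabla f\|^2-\tfrac12\|\nabla\hat f_\mu-\nabla f\|^2$; this plays exactly the role of the paper's Lemma 2 and yields the same $\beta_t=\tfrac12\alpha_t-\tfrac{(d+Bm-1)L}{Bm}\alpha_t^2$ coefficient, the same use of the condition $N>16(d+Bm-1)^2/(Bm)$, and a bound that is if anything slightly tighter (your route never generates the switching-penalty terms, which only makes the stated inequality easier to satisfy). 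One point to execute carefully: when you pass from the per-worker second moment $\mathbb{E}\|\tilde{\boldsymbol{G}}_{t,i}\|^2$ to the averaged $\mathbb{E}\|\tilde{\boldsymbol{G}}_t\|^2$, you must first decompose into variance plus $\|\nabla\hat f_\mu\|^2$ — only the variance part shrinks by $1/(Bm)$ — which is what produces the $2(d+Bm-1)/(Bm)$ coefficient you quote (this is precisely the paper's Lemma 3, proved via the conditional mean--variance split and the bound $\mathbb{E}_{\boldsymbol{v}}\|\boldsymbol{G}_\mu\|^2\le 2d\|\nabla F\|^2+\mu^2L^2d^2/2$); averaging the raw per-worker bound as written would lose the $1/(Bm)$ factor on the $d\sigma^2$ term.
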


\begin{remark}[\textbf{Order of Convergence}]\label{rem: convergence order}
The above theorem indicates that 
for any $ \tau > 1 $, the convergence rate is of the order of 
$ \left(\frac{d}{\sqrt{mN}} \frac{\tau - 1}{\tau} \right) = \mathcal{O} \left(\frac{d}{\sqrt{mN}} \right) $
with respect to the parameters of the problem, which significantly outperforms the zeroth-order stochastic gradient methods (see Table \ref{tab: comparison} for more details). 
Moreover, by setting the period of the first-order gradient updates such that $ \tau = \mathcal{O} \left( d \right) $, we can guarantee the same convergence rate as in the fastest converging communication-efficient methods such as \cite{haddadpour2019trading}, which has the convergence rate of $ \mathcal{O} \left(\frac{\tau}{\sqrt{mN}} \right) $.  
Also note that in the special case of  $ \tau \geq N $, i.e., when the workers only use stochastic zeroth-order oracle for gradient estimation, the proposed algorithm recovers zeroth-order SGD algorithm. 
In this case also, our obtained convergence rate significantly improves the previous results on the convergence rate of the zeroth-order SGD algorithms  \cite{sahu2019towards, liu2018zeroth}. 
Finally, when 
$ \tau = 1 $, i.e., when the workers only use stochastic first-order oracle for gradient estimation, the above theorem guarantees a convergence rate of $ \mathcal{O} (\frac{1}{\sqrt{mN}} ) $, 
which is consistent with 
the convergence rate of fully synchronous SGD \cite{wang2018cooperative}, \cite{dekel2012optimal}.  

\end{remark}

\begin{remark}[\textbf{Error decomposition}]\label{rem: error decomp}
It is noted that the error upper bound \eqref{eq: error bound in the main theorem} is decomposed into two parts. The first part, which contains the first two terms,  
is similar to  
 the error bound in fully synchronous SGD \cite{bottou2018optimization}, and is resulted from the periodic steps of first-order stochastic gradient updates. The second part, containing the remaining terms,  
is a measure of the approximation error in zeroth-order stochastic gradient steps, and vanishes when $ \tau =1 $, i.e., when there is no zeroth-order gradient updates. 
\end{remark}

\begin{remark}[\textbf{Dependence on $ \tau $}] \label{rem: dependence on tau}
It should be noted that the upper bound \eqref{eq: error bound in the main theorem}  
grows 
very slowly with the period of first-order updates $ \tau $, with an order of $ \mathcal{O} \left( 1 \right) $. 
As a result, 
for a fixed number of 
 iterations,  our algorithm needs 
 few rounds of first-order stochastic gradient vectors computation and  communication among the nodes to reach a certain error bound.   
This significantly contributes to the communication efficiency and computation efficiency of the proposed algorithm. 
Moreover, such growth rate significantly outperforms the existing results in the model averaging schemes, where the error upper bound grows quadratically or linearly with $ \tau $, which is a result of high model discrepancies due to local model updates \cite{yu2018parallel, zhou2017convergence, haddadpour2019trading}. 
\end{remark}

\section{Experiments}

In this section, we experimentally evaluate the performance of the proposed algorithm and compare it to various state-of-the-art  distributed 
algorithms, including the model averaging scheme  
RI-SGD \cite{haddadpour2019trading}, 
fully synchronous SGD 
(syncSGD) \cite{wang2018cooperative}, 
zeroth-order stochastic gradient method (ZO-SGD) \cite{sahu2019towards}, 
and the zeroth-order stochastic variance reduced gradient method 
(ZO-SVRG-Ave) \cite{liu2018zeroth}.
We evaluate the performance of the proposed algorithm on two different applications, as follows.   

\subsection{Generation of Adversarial Examples from DNNs}\label{sec: sim adversarial}

The first application is generation of adversarial examples from DNNs, which arises in testing the robustness of a de- ployed DNN to adversarial attacks. 
In the context of image classification, adversarial examples are carefully crafted perturbed images that are barely noticable and visually imperceptible, but when added to natural images, can fool the target model to mis-classify \cite{madry2017towards, liu2018zeroth}. 
In many applications dealing with mission-critical information, the robustness of a deployed DNN to adversarial attacks is highly critical for reliability of the model, e.g., traffic sign identification for autonomous driving. 
The task of generating a universal adversarial perturbation to $ K $ natural images can be regarded as an optimization problem of the form \eqref{eq: prob form 2}. 
More details on the problem formulation of  generating adversarial examples can be found in Appendix \ref{sec:App A}. 
Note that in general, the attacker can utilize the model evaluations and the parameters of the model to acquire its gradients \cite{madry2017towards}.

\textbf{Experimental Setup:} 
Similar to  \cite{liu2018zeroth},  
we apply the proposed algorithm and the baselines to generate adversarial examples to attack a 
well-trained DNN7 on the MNIST handwritten digit classification task, which achieves 99.4 test accuracy on natural examples. \footnote{\url{https://github.com/carlini/nn\_robust\_attacks}.} 
In our experiments, performed on a system with an Nvidia Tesla K80 GPU, we choose $ n = 10 $ examples from the same class, and set the 
 batch size and the number of workers to $ B = 5 $ and  $ m=5 $, respectively, for all the methods. We also use a constant step-size of $ 30/d $, where $ d=900 $  is the image dimension, and the smoothing parameter follows $ \mu = \mathcal{O}(1/\sqrt{dN}) $, where $ N $ is the number of iterations.

\textbf{Experimental Results:} 
Fig. \ref{fig: attack loss} depicts the attack loss versus the number of iterations, and Table \ref{tab: l2 distortion} shows the least $ l_2 $ distortion of the successful (universal) adversarial examples. 
It can be verified that compared to the zeroth-order methods of ZO-SGD and ZO-SVRG-Ave, the proposed method achieves significantly faster convergence and lower loss. Moreover, its convergence speed and attained loss is comparable to those of the fastest first-order methods, i.e., RI-SGD and Synchronous SGD. 
In terms of  $ l_2 $ distortion, 
the proposed method suggests better visual quality of the resulting adversarial examples than those in the previous zeroth-order methods, and its visual quality is similar to those of the first-order methods.

\begin{figure}[t]
\vskip 0.2in
\begin{center}
\centerline{\includegraphics[width=1
\columnwidth]
{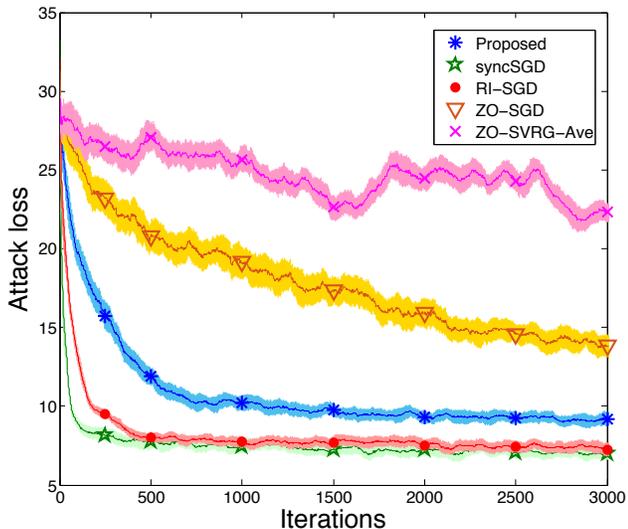}}
\caption{
Comparison of different methods for the task of generating universal adversarial examples from a DNN. Attack loss versus iterations.}
\label{fig: attack loss}
\end{center}
\vskip -0.2in
\end{figure}

\begin{table}[]
\caption{
$ l_2 $ distortion
}
\label{tab: l2 distortion}
\vskip 0.15in
\begin{center}
\begin{small}
\begin{sc}
\begin{tabular}{lcr}
\toprule
Method & $ l_2 $ distortion 
 \\
\midrule
RI-SGD      &  $6.08$  \\
syncSGD      &  $5.90$   \\
Proposed       &  $8.86$   \\
ZO-SGD      &  $10.07$  \\
ZO-SVRG-Ave      &   $ 16.41 $  \\  
\bottomrule
\end{tabular}
\end{sc}
\end{small}
\end{center}
\vskip -0.1in
\end{table}

Table \ref{attacked images} shows the original natural images along with the resulted adversarial examples generated by different methods, 
 under the elaborated experimental setup. 

\begin{table*}[bh]
\caption{The generated adversarial examples from a well-trained DNN7 on MNIST using the proposed method and the baselines.}
\label{attacked images}
\vskip 0.15in
\begin{center}
\begin{small}
\begin{sc}
\begin{tabular}{lcccccccccc}
\toprule
Image ID & 6 & 72 & 128 & 211 & 315 & 398 &  475 &  552 &  637 & 738 \\
\midrule
Original & \begin{minipage}{.05\textwidth}
      \includegraphics[width=\linewidth]{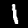}
    \end{minipage}
     & \begin{minipage}{.05\textwidth}
      \includegraphics[width=\linewidth]{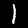}
    \end{minipage}
     & \begin{minipage}{.05\textwidth}
      \includegraphics[width=\linewidth]{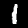}
    \end{minipage}
     & \begin{minipage}{.05\textwidth}
      \includegraphics[width=\linewidth]{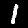}
    \end{minipage}
     & \begin{minipage}{.05\textwidth}
      \includegraphics[width=\linewidth]{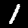}
    \end{minipage}
     & \begin{minipage}{.05\textwidth}
      \includegraphics[width=\linewidth]{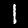}
    \end{minipage} 
     &  \begin{minipage}{.05\textwidth}
      \includegraphics[width=\linewidth]{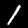}
    \end{minipage}
     &  \begin{minipage}{.05\textwidth}
      \includegraphics[width=\linewidth]{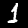}
    \end{minipage}
     &  \begin{minipage}{.05\textwidth}
      \includegraphics[width=\linewidth]{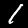}
    \end{minipage}
     & \begin{minipage}{.05\textwidth}
      \includegraphics[width=\linewidth]{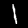}
    \end{minipage}
     \\
\midrule
HO-SGD (Proposed)    & \begin{minipage}{.05\textwidth}
      \includegraphics[width=\linewidth]{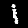}
    \end{minipage}
     & \begin{minipage}{.05\textwidth}
      \includegraphics[width=\linewidth]{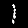}
    \end{minipage}
     & \begin{minipage}{.05\textwidth}
      \includegraphics[width=\linewidth]{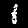}
    \end{minipage}
     & \begin{minipage}{.05\textwidth}
      \includegraphics[width=\linewidth]{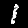}
    \end{minipage}
     & \begin{minipage}{.05\textwidth}
      \includegraphics[width=\linewidth]{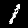}
    \end{minipage}
     & \begin{minipage}{.05\textwidth}
      \includegraphics[width=\linewidth]{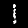}
    \end{minipage} 
     &  \begin{minipage}{.05\textwidth}
      \includegraphics[width=\linewidth]{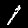}
    \end{minipage}
     &  \begin{minipage}{.05\textwidth}
      \includegraphics[width=\linewidth]{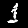}
    \end{minipage}
     &  \begin{minipage}{.05\textwidth}
      \includegraphics[width=\linewidth]{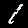}
    \end{minipage}
     & \begin{minipage}{.05\textwidth}
      \includegraphics[width=\linewidth]{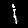}
    \end{minipage} \\
Classified as    & 3 & 3 & 8 & 8 & 8 & 3 &  7 &  3 &  8 & 3 \\
\midrule
ZO-SGD    & \begin{minipage}{.05\textwidth}
      \includegraphics[width=\linewidth]{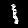}
    \end{minipage}
     & \begin{minipage}{.05\textwidth}
      \includegraphics[width=\linewidth]{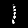}
    \end{minipage}
     & \begin{minipage}{.05\textwidth}
      \includegraphics[width=\linewidth]{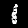}
    \end{minipage}
     & \begin{minipage}{.05\textwidth}
      \includegraphics[width=\linewidth]{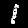}
    \end{minipage}
     & \begin{minipage}{.05\textwidth}
      \includegraphics[width=\linewidth]{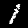}
    \end{minipage}
     & \begin{minipage}{.05\textwidth}
      \includegraphics[width=\linewidth]{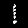}
    \end{minipage} 
     &  \begin{minipage}{.05\textwidth}
      \includegraphics[width=\linewidth]{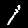}
    \end{minipage}
     &  \begin{minipage}{.05\textwidth}
      \includegraphics[width=\linewidth]{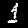}
    \end{minipage}
     &  \begin{minipage}{.05\textwidth}
      \includegraphics[width=\linewidth]{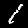}
    \end{minipage}
     & \begin{minipage}{.05\textwidth}
      \includegraphics[width=\linewidth]{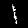}
    \end{minipage} \\
Classified as    & 3 & 8 & 8 & 8 & 8 & 8 &  7 &  3 &  8 & 3 \\
\midrule
ZO-SVRG-Ave 
& \begin{minipage}{.05\textwidth}
      \includegraphics[width=\linewidth]{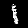}
    \end{minipage}
     & \begin{minipage}{.05\textwidth}
      \includegraphics[width=\linewidth]{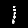}
    \end{minipage}
     & \begin{minipage}{.05\textwidth}
      \includegraphics[width=\linewidth]{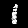}
    \end{minipage}
     & \begin{minipage}{.05\textwidth}
      \includegraphics[width=\linewidth]{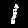}
    \end{minipage}
     & \begin{minipage}{.05\textwidth}
      \includegraphics[width=\linewidth]{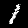}
    \end{minipage}
     & \begin{minipage}{.05\textwidth}
      \includegraphics[width=\linewidth]{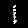}
    \end{minipage} 
     &  \begin{minipage}{.05\textwidth}
      \includegraphics[width=\linewidth]{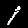}
    \end{minipage}
     &  \begin{minipage}{.05\textwidth}
      \includegraphics[width=\linewidth]{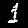}
    \end{minipage}
     &  \begin{minipage}{.05\textwidth}
      \includegraphics[width=\linewidth]{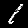}
    \end{minipage}
     & \begin{minipage}{.05\textwidth}
      \includegraphics[width=\linewidth]{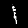}
    \end{minipage} \\
Classified as    & 8 & 7 & 8 & 8 & 8 & 8 &  7 &  3 &  8 & 3 \\
\midrule 
syncSGD    & \begin{minipage}{.05\textwidth}
      \includegraphics[width=\linewidth]{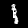}
    \end{minipage}
     & \begin{minipage}{.05\textwidth}
      \includegraphics[width=\linewidth]{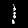}
    \end{minipage}
     & \begin{minipage}{.05\textwidth}
      \includegraphics[width=\linewidth]{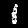}
    \end{minipage}
     & \begin{minipage}{.05\textwidth}
      \includegraphics[width=\linewidth]{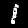}
    \end{minipage}
     & \begin{minipage}{.05\textwidth}
      \includegraphics[width=\linewidth]{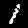}
    \end{minipage}
     & \begin{minipage}{.05\textwidth}
      \includegraphics[width=\linewidth]{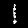}
    \end{minipage} 
     &  \begin{minipage}{.05\textwidth}
      \includegraphics[width=\linewidth]{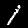}
    \end{minipage}
     &  \begin{minipage}{.05\textwidth}
      \includegraphics[width=\linewidth]{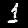}
    \end{minipage}
     &  \begin{minipage}{.05\textwidth}
      \includegraphics[width=\linewidth]{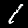}
    \end{minipage}
     & \begin{minipage}{.05\textwidth}
      \includegraphics[width=\linewidth]{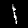}
    \end{minipage} \\       
Classified as    & 3 & 8 & 8 & 8 & 8 & 8 &  7 &  3 &  8 & 3\\
\midrule
RI-SGD    & \begin{minipage}{.05\textwidth}
      \includegraphics[width=\linewidth]{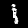}
    \end{minipage}
     & \begin{minipage}{.05\textwidth}
      \includegraphics[width=\linewidth]{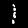}
    \end{minipage}
     & \begin{minipage}{.05\textwidth}
      \includegraphics[width=\linewidth]{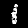}
    \end{minipage}
     & \begin{minipage}{.05\textwidth}
      \includegraphics[width=\linewidth]{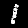}
    \end{minipage}
     & \begin{minipage}{.05\textwidth}
      \includegraphics[width=\linewidth]{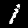}
    \end{minipage}
     & \begin{minipage}{.05\textwidth}
      \includegraphics[width=\linewidth]{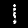}
    \end{minipage} 
     &  \begin{minipage}{.05\textwidth}
      \includegraphics[width=\linewidth]{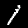}
    \end{minipage}
     &  \begin{minipage}{.05\textwidth}
      \includegraphics[width=\linewidth]{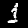}
    \end{minipage}
     &  \begin{minipage}{.05\textwidth}
      \includegraphics[width=\linewidth]{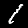}
    \end{minipage}
     & \begin{minipage}{.05\textwidth}
      \includegraphics[width=\linewidth]{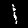}
    \end{minipage} \\       
Classified as    & 3 & 3 & 8 & 8 & 8 & 3 &  7 &  3 &  8 & 3\\
\bottomrule
\end{tabular}
\end{sc}
\end{small}
\end{center}
\vskip -0.1in
\end{table*}

\begin{table*}[]
\caption{Details of the datasets used in the second experiments.}
\label{datasets}
\vskip 0.15in
\begin{center}
\begin{small}
\begin{sc}
\begin{tabular}{lccccp{4.6 cm}} 
\toprule
Dataset & $ \verb|#| $ Classes & $ \verb|#| $ Training data  & $ \verb|#| $ Testing data & $ \verb|#| $ Features & Description  \\
\midrule
SENSORLESS    & $ 11 $ & $ 48509 $  &  $ 10000 $ &  $ 48 $ & 
Sensor-less drive diagnosis \cite{wang2018distributed} \\
\midrule
ACOUSTIC    & $ 3 $ & $ 78823 $ & $19705$ &  $ 50 $ & Accoustic vehicle classification in distributed sensor networks \cite{duarte2004vehicle} \\
\midrule
COVTYPE    & $ 7 $ & $ 50000 $  &  $ 81012 $ &  $ 54 $ & Forest cover type prediction from cartographic variables only \cite{asuncion2007uci}\\
\midrule
SEISMIC    & $ 3 $ & $ 78823 $  &  $ 19705 $ &  $ 50 $ & Seismic vehicle classification in distributed sensor networks \cite{duarte2004vehicle} \\
\bottomrule
\end{tabular}
\end{sc}
\end{small}
\end{center}
\vskip -0.1in
\end{table*}

\subsection{Multi-Class Classification Tasks with Multiple Workers}




The second experiment includes various multi-class classification tasks performed by multiple worker nodes in a distributed environment. We use four  different famous datasets, including COVTYPE, 
SensIT Vehicle (both ACOUSTIC and SEISMIC),  
and SENSORLESS. 
{\footnote{All the datasets 
are available online at \url{https://www.csie.ntu.edu.tw/\~ cjlin/libsvmtools/datasets/multiclass.html}. }} Details and the description of each dataset can be found in Table \ref{datasets}.  

\textbf{Experimental Setup:} As the base model for training, we choose a high-dimensional fully connected two-layer neural network with more than $ 1.69 M $ parameters ($ 1.3 K $ and $ 1.3 K $ hidden neurons, consecutively), i.e., $ d> 1.69 \times 10^6 $. 
We use PyTorch \cite{paszke2019pytorch} to develop the proposed algorithm and the baselines in a distributed environment, and conduct different types of experiments on 
a system with 8 Cores of CPU and 4 Nvidia Tesla K80 GPUs. 

The number of worker nodes and the batch size are set to $ m=4 $ and $ B= 64 $, respectively, for all the methods, and the period of the first-order gradient exchanges is set to $ \tau = 8 $ for the proposed method and the periodic model averaging method of RI-SGD. Moreover, a redundancy factor of $ \mu= 0.25 $ is considered for the RI-SGD method. All the methods are run from the same initial points.  
Finally, it should be noted that for each dataset, we have optimized the learning rates of all the methods, in order to 
have a fair comparison.


\textbf{Experimental Results:} The performance of different methods in distributed training of the considered model for various datasets is compared in Fig. \ref{fig: experiment2_all_datasets}. It shows the training loss versus iterations, the training loss versus wall-clock time (in seconds), and the testing accuracy versus wall-clock time, for different  datasets as aforementioned. As can be verified from the figures, the proposed method significantly outperforms the zeroth-order method ZO-SGD, in terms of convergence speed, wall-clock time, and testing accuracy. 
Moreover, 
despite the high dimension of the model, the performance of the proposed method is still comparable to the first-order methods of sync-SGD and RI-SGD (which is a model averaging method with the complementary help of infused redundancy), while the proposed method benefits from lower computational complexity as discussed before. 
These experimental findings comply with our theoretical results discussed in Section \ref{sec: convergence analysis}. 

\begin{figure*}[t]
	\begin{minipage}[c][1\width]{
	   0.3\textwidth}
	   \centering
	   \includegraphics[width=1.1\textwidth]{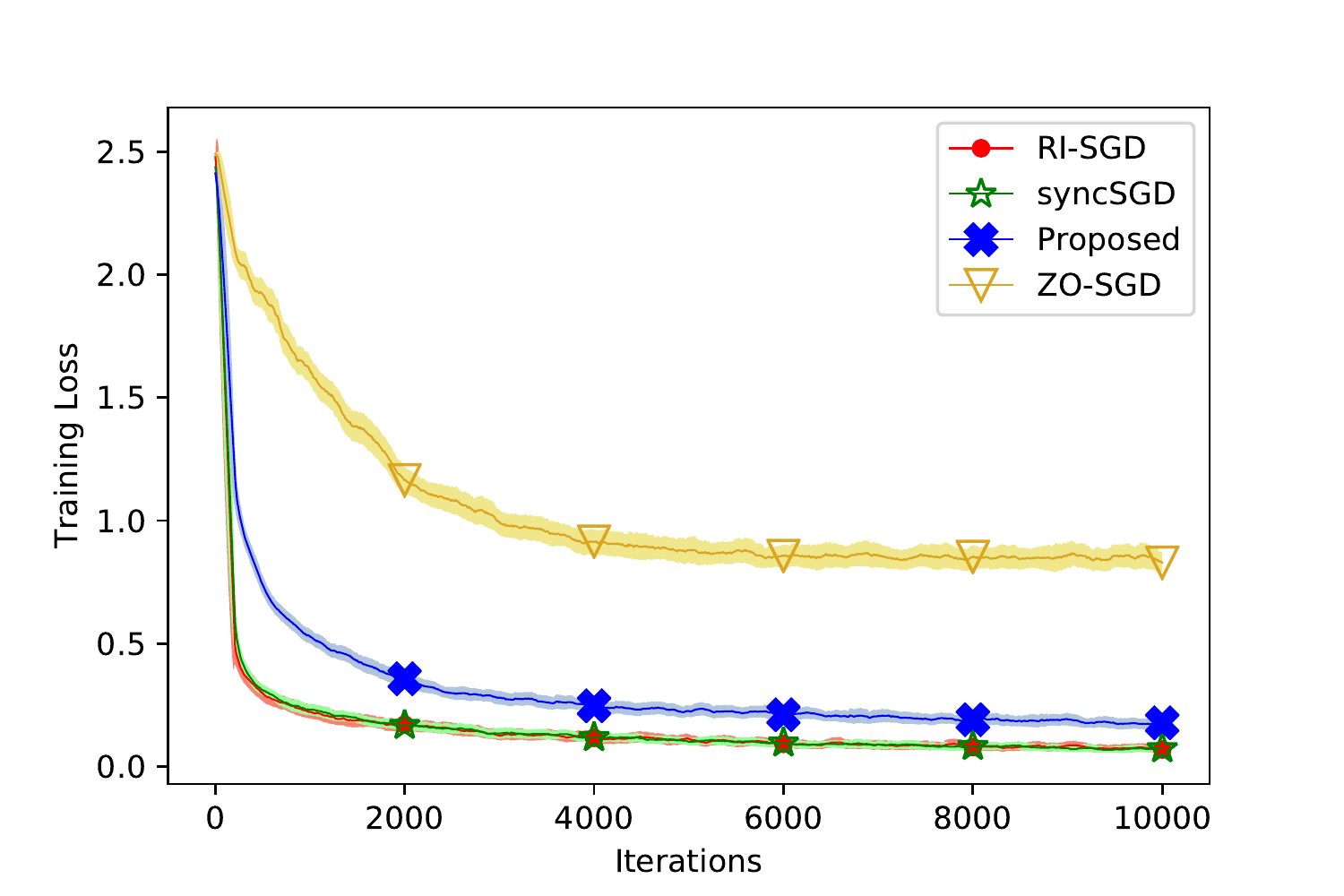}
	\end{minipage}
 \hfill 	
	\begin{minipage}[c][1\width]{
	   0.3\textwidth}
	   \centering
	   \includegraphics[width=1.1\textwidth]{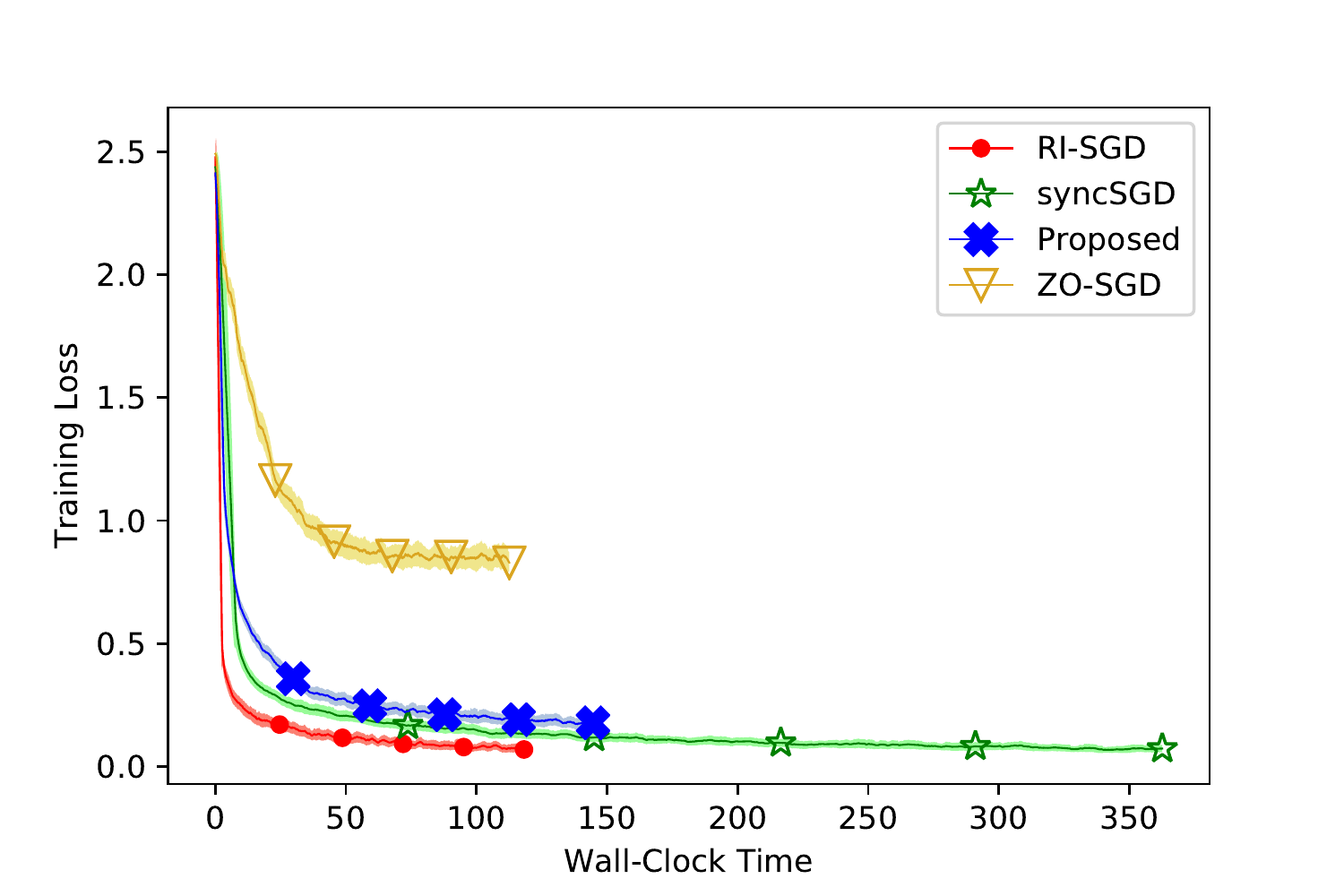}
	\end{minipage}
 \hfill	
	\begin{minipage}[c][1\width]{
	   0.3\textwidth}
	   \centering
	   \includegraphics[width=1.1\textwidth]{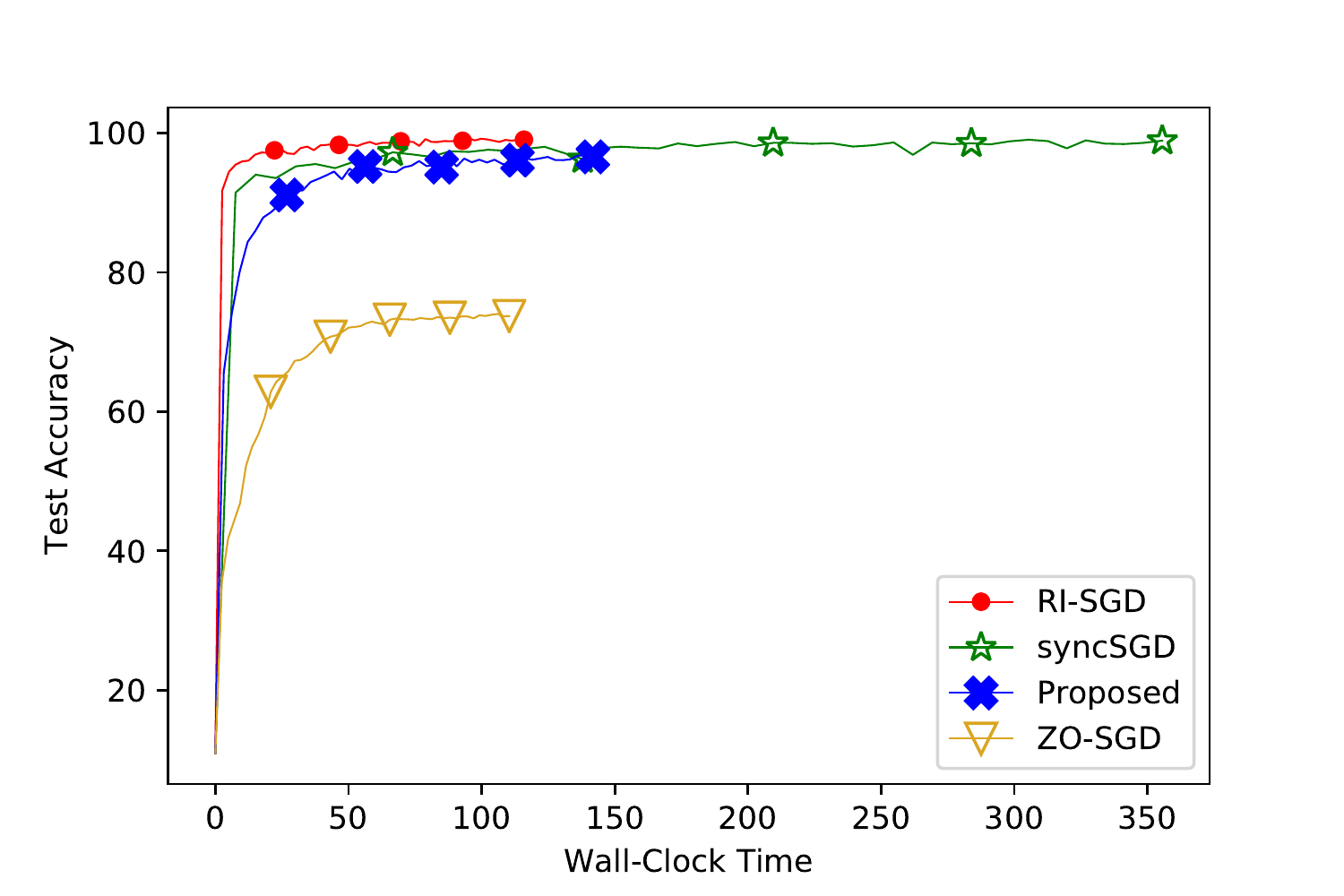}
	\end{minipage}
		\newline
	\begin{minipage}[c][1\width]{
	   0.3\textwidth}
	   \centering
	   \includegraphics[width=1.1\textwidth]{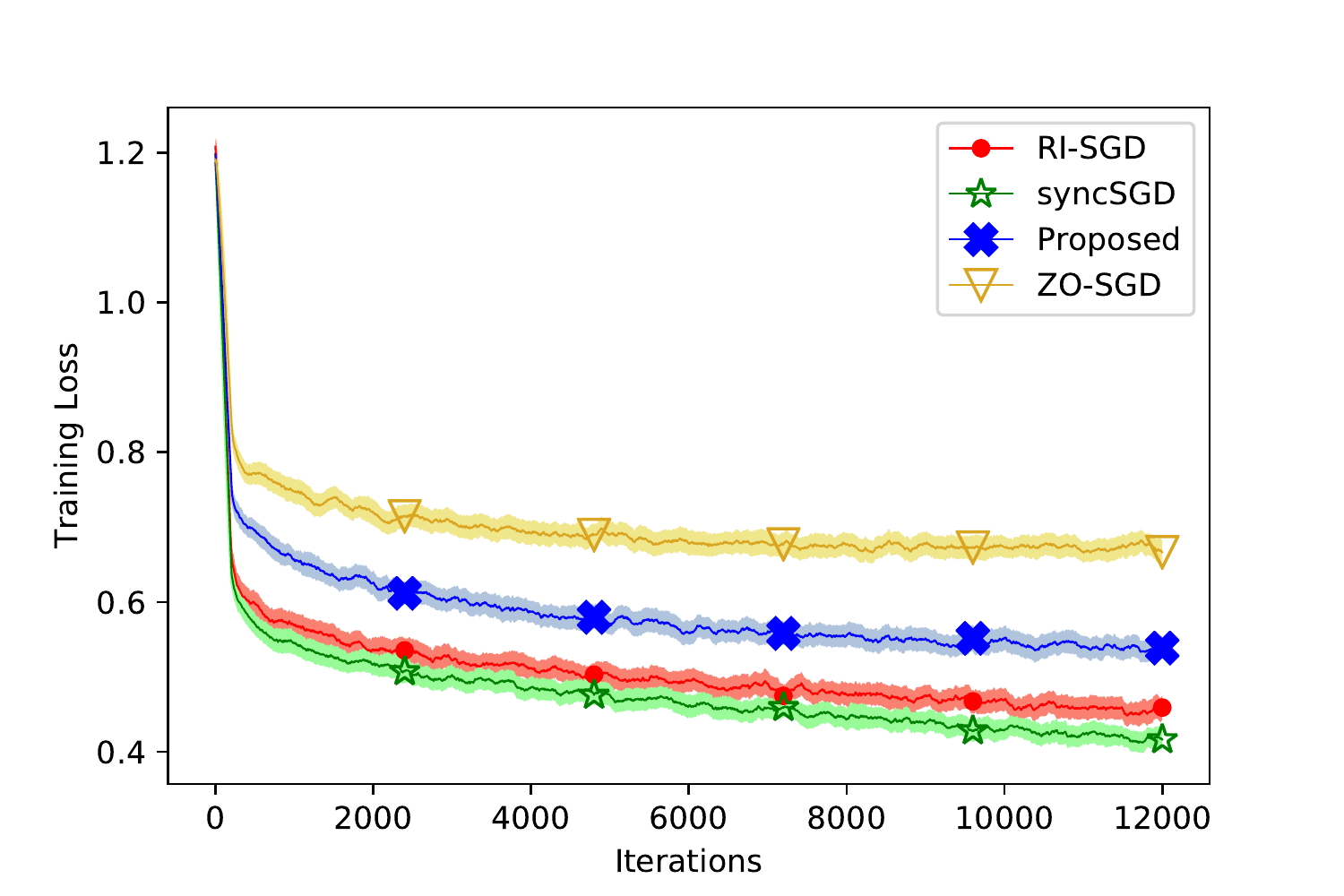}
	\end{minipage}
 \hfill 	
	\begin{minipage}[c][1\width]{
	   0.3\textwidth}
	   \centering
	   \includegraphics[width=1.1\textwidth]{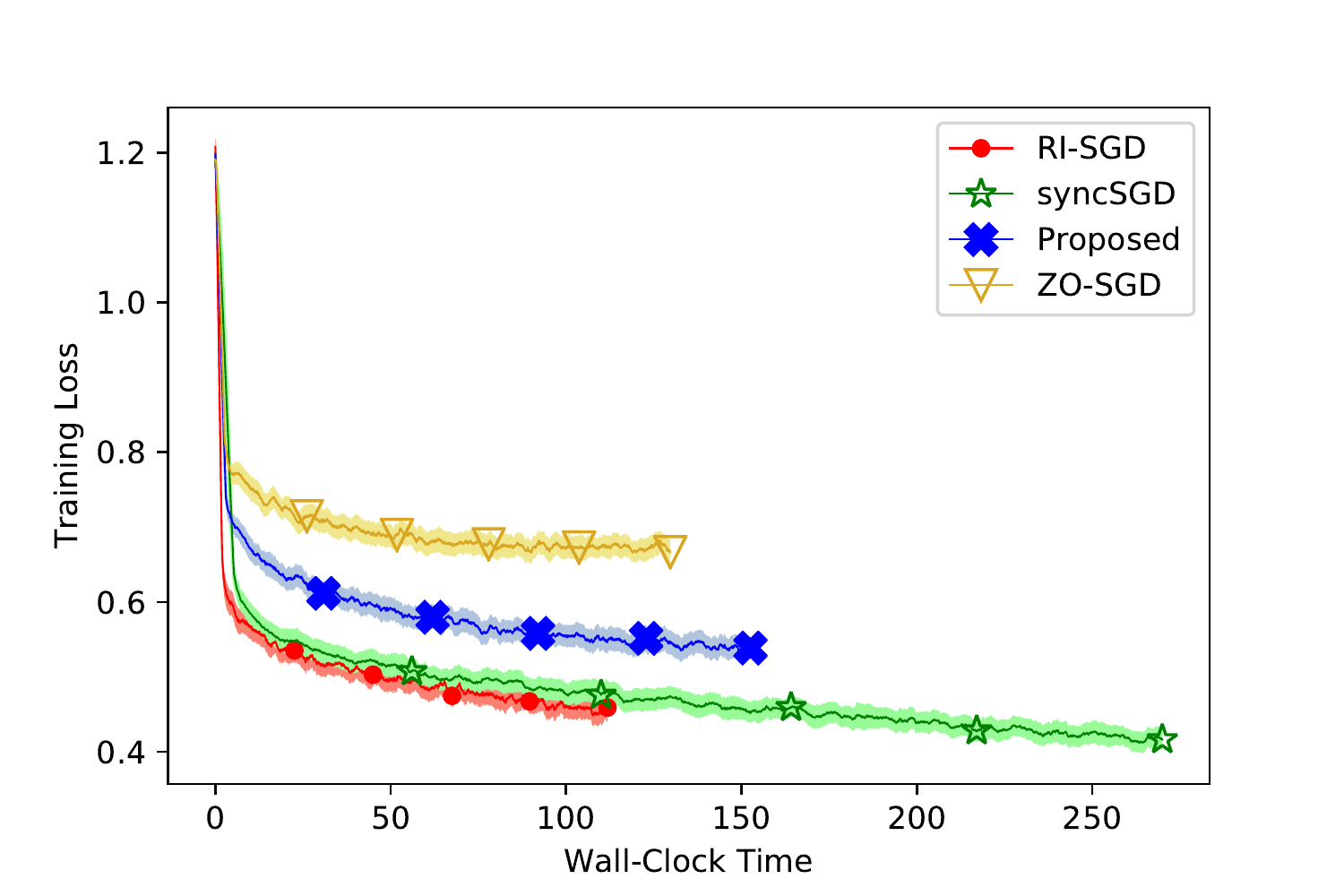}
	\end{minipage}
 \hfill	
	\begin{minipage}[c][1\width]{
	   0.3\textwidth}
	   \centering
	   \includegraphics[width=1.1\textwidth]{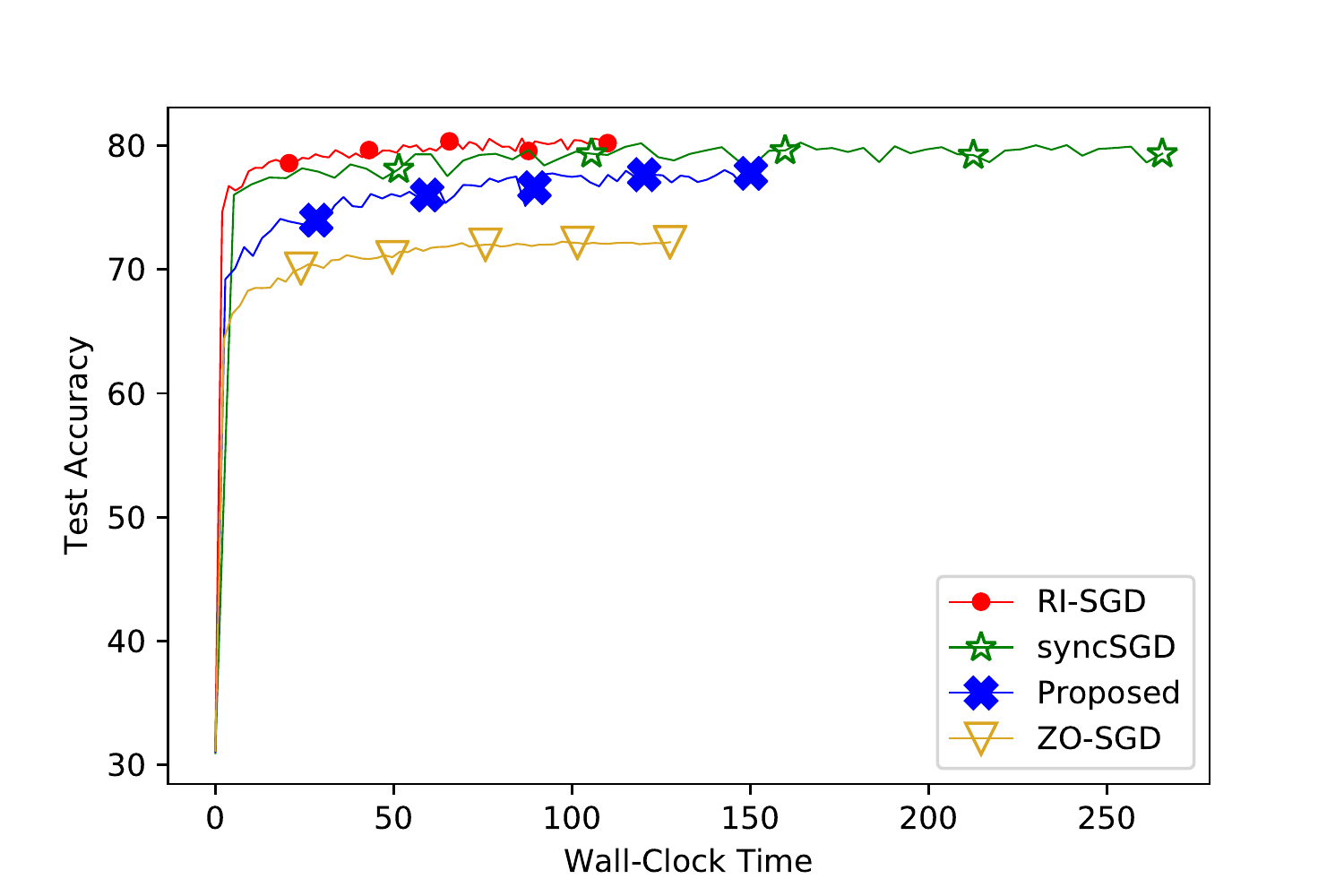}
	\end{minipage}
		\newline
	\begin{minipage}[c][1\width]{
	   0.3\textwidth}
	   \centering
	   \includegraphics[width=1.1\textwidth]{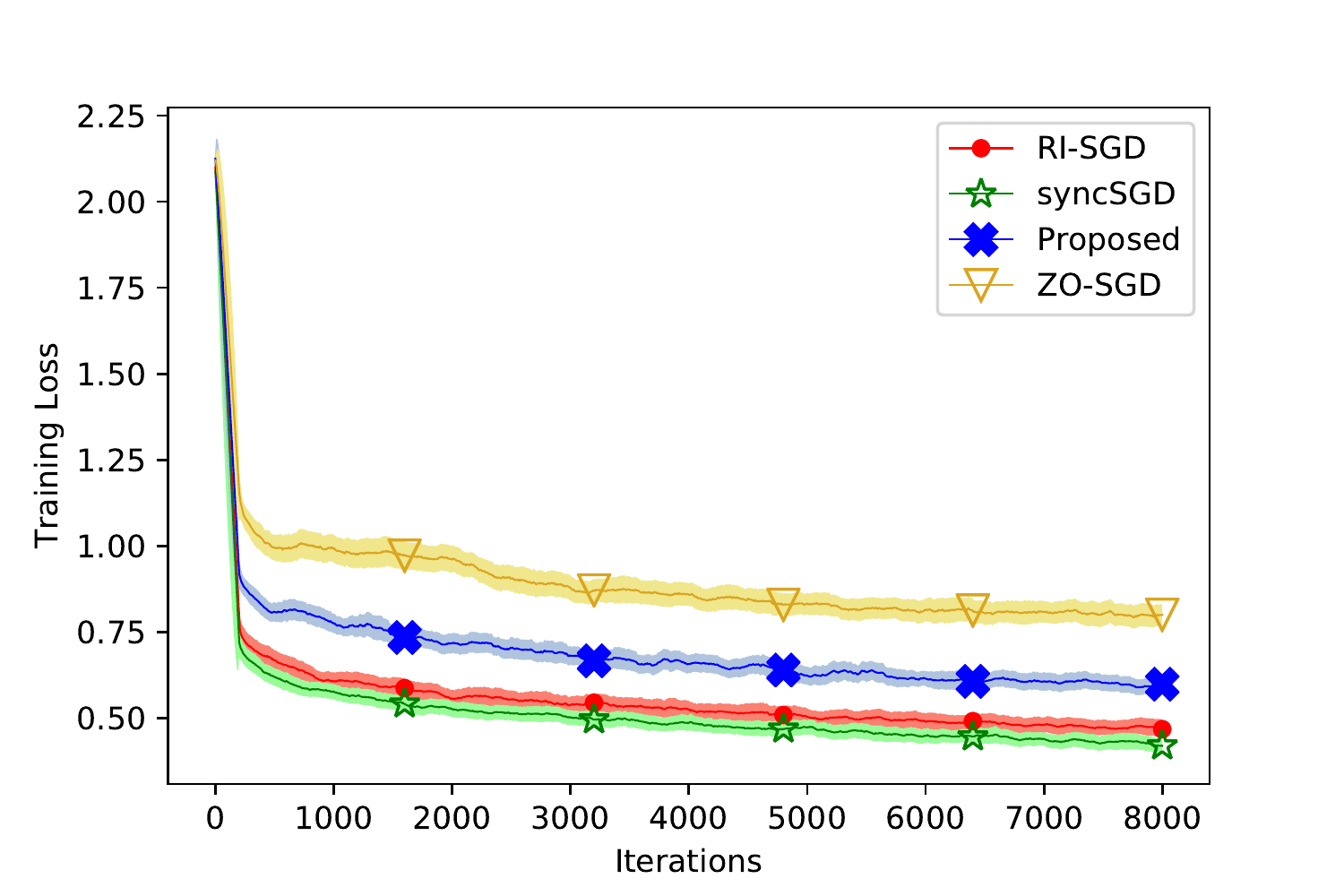}
	\end{minipage}
 \hfill 	
	\begin{minipage}[c][1\width]{
	   0.3\textwidth}
	   \centering
	   \includegraphics[width=1.1\textwidth]{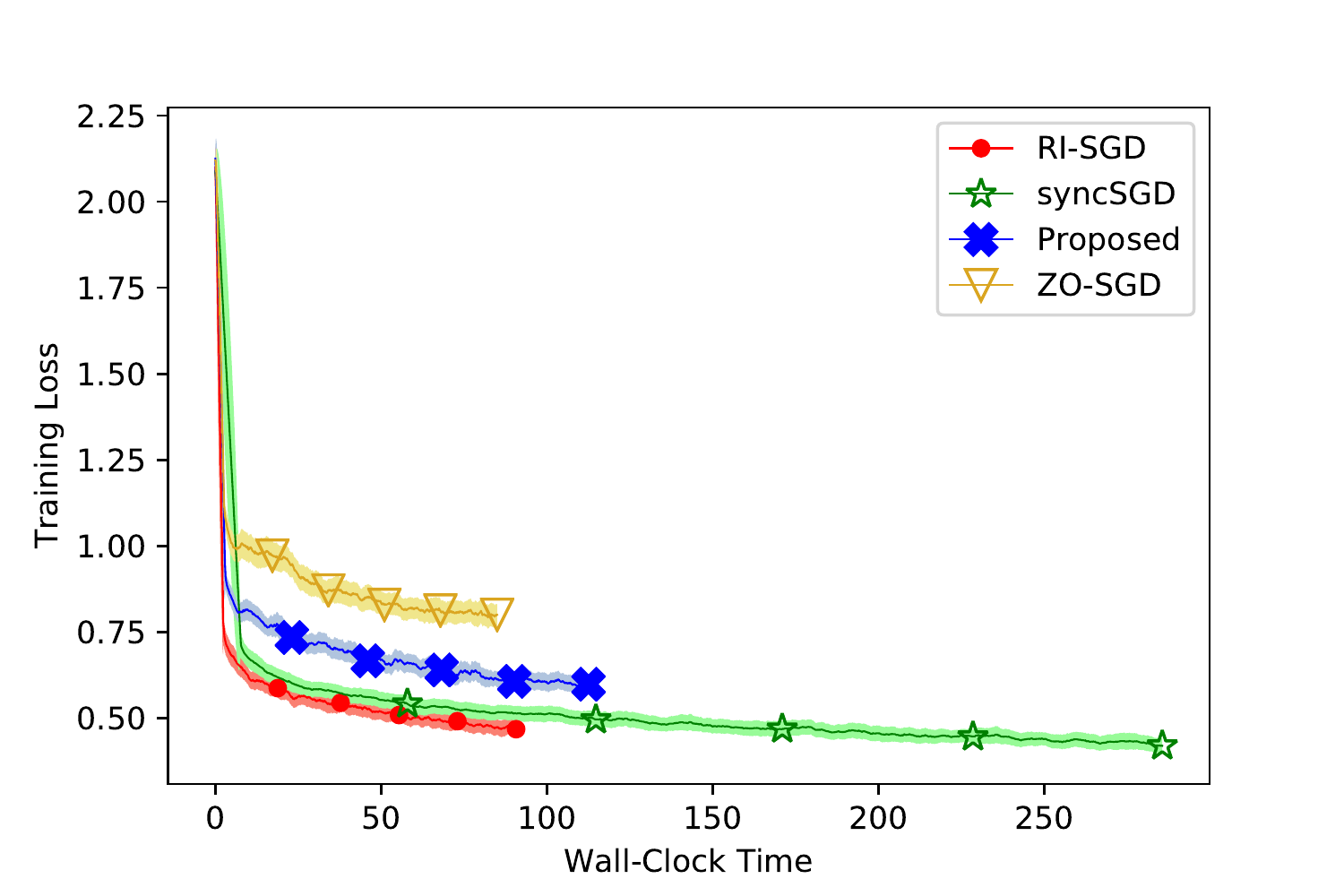}
	\end{minipage}
 \hfill	
	\begin{minipage}[c][1\width]{
	   0.3\textwidth}
	   \centering
	   \includegraphics[width=1.1\textwidth]{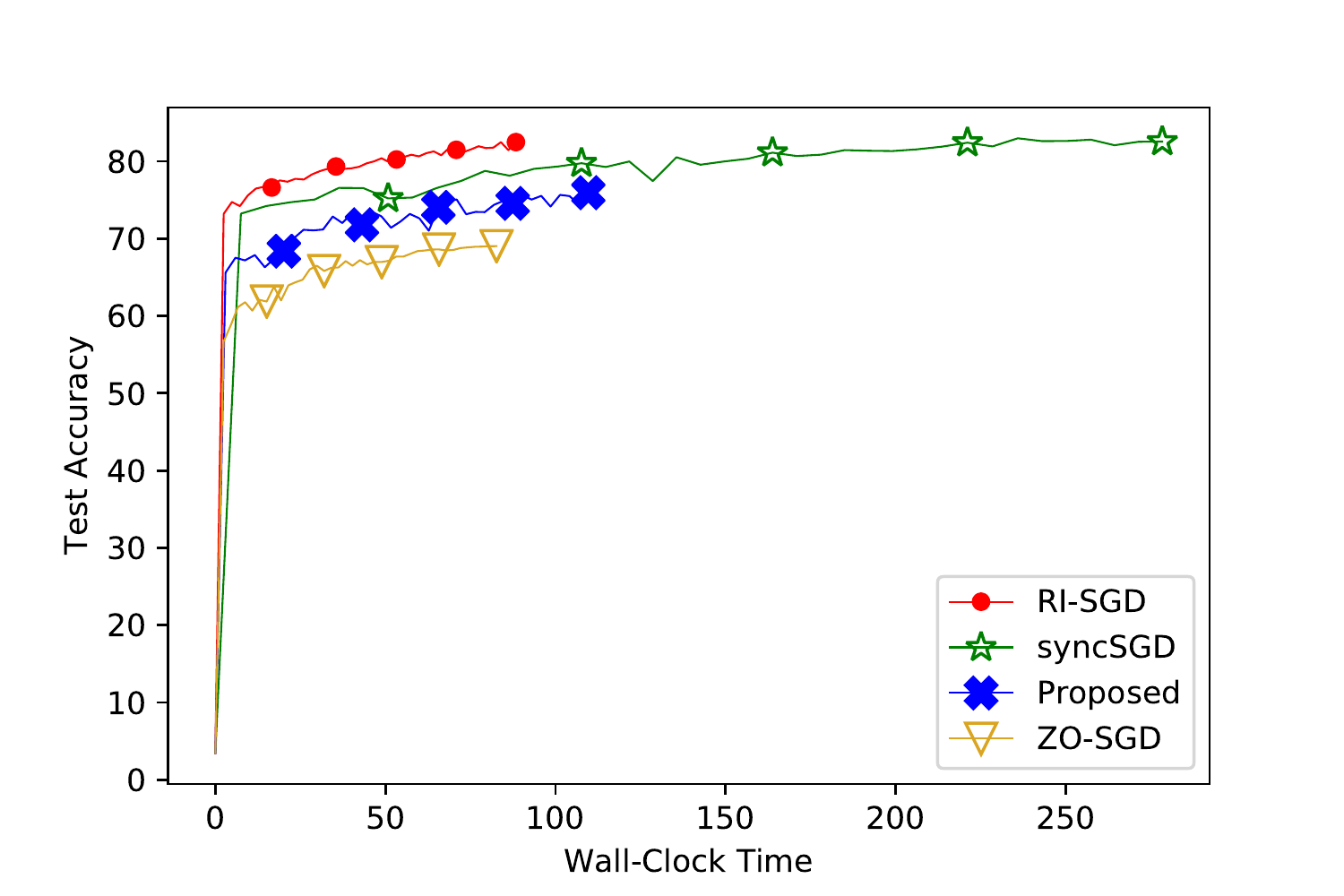}
	\end{minipage}
	\newline
	\begin{minipage}[c][1\width]{
	   0.3\textwidth}
	   \centering
	   \includegraphics[width=1.1\textwidth]{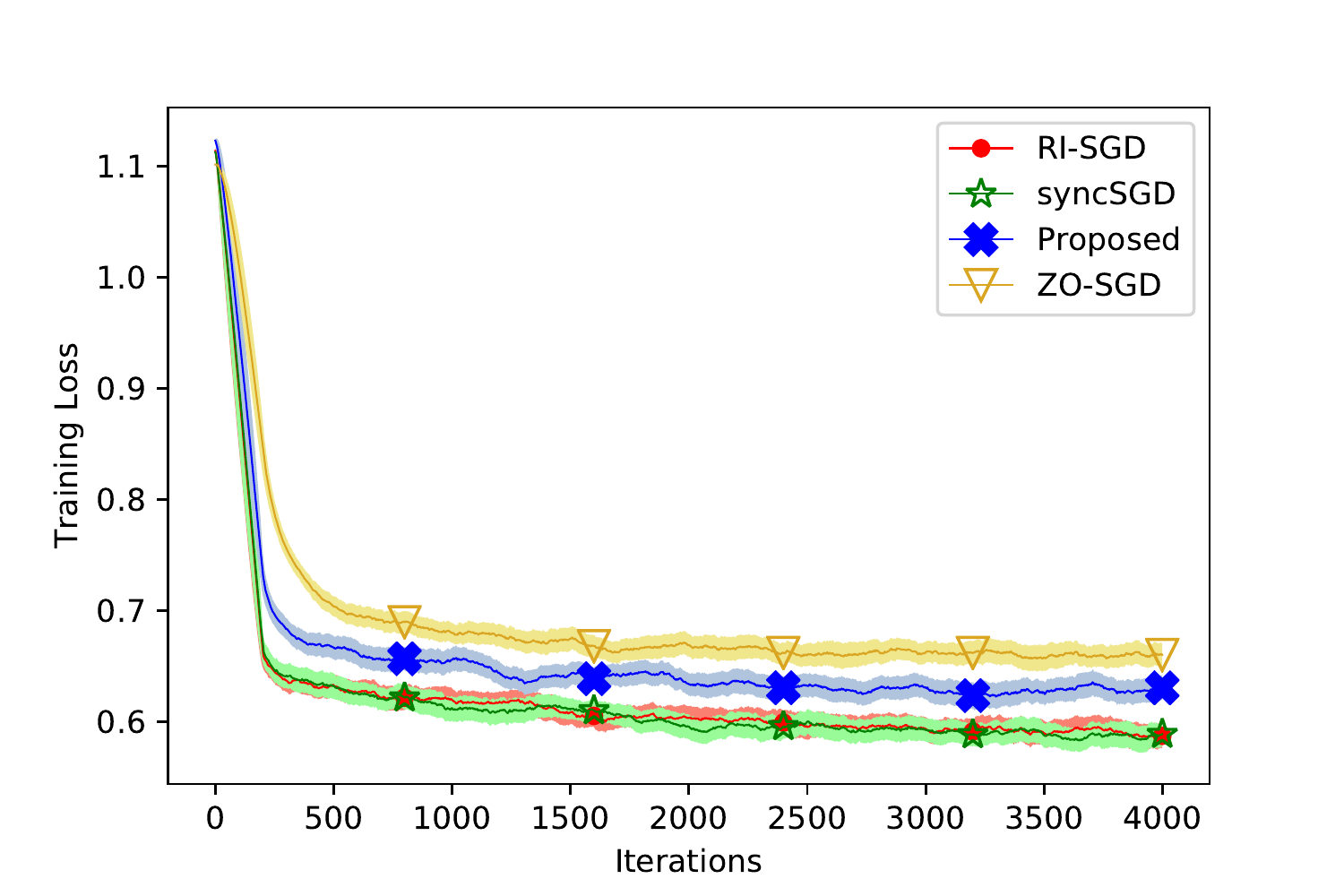}
	\end{minipage}
 \hfill 	
	\begin{minipage}[c][1\width]{
	   0.3\textwidth}
	   \centering
	   \includegraphics[width=1.1\textwidth]{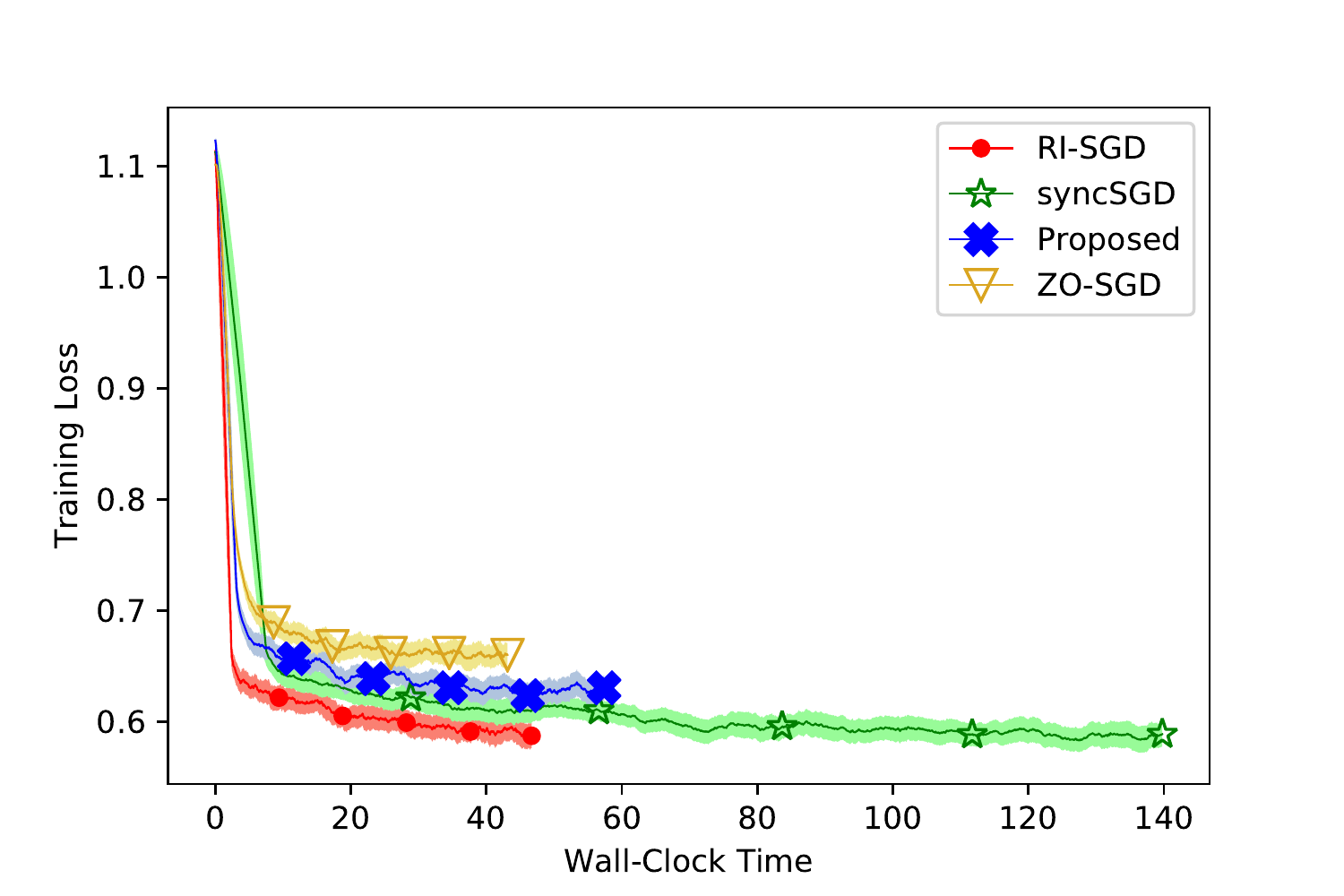}
	\end{minipage}
 \hfill	
	\begin{minipage}[c][1\width]{
	   0.3\textwidth}
	   \centering
	   \includegraphics[width=1.1\textwidth]{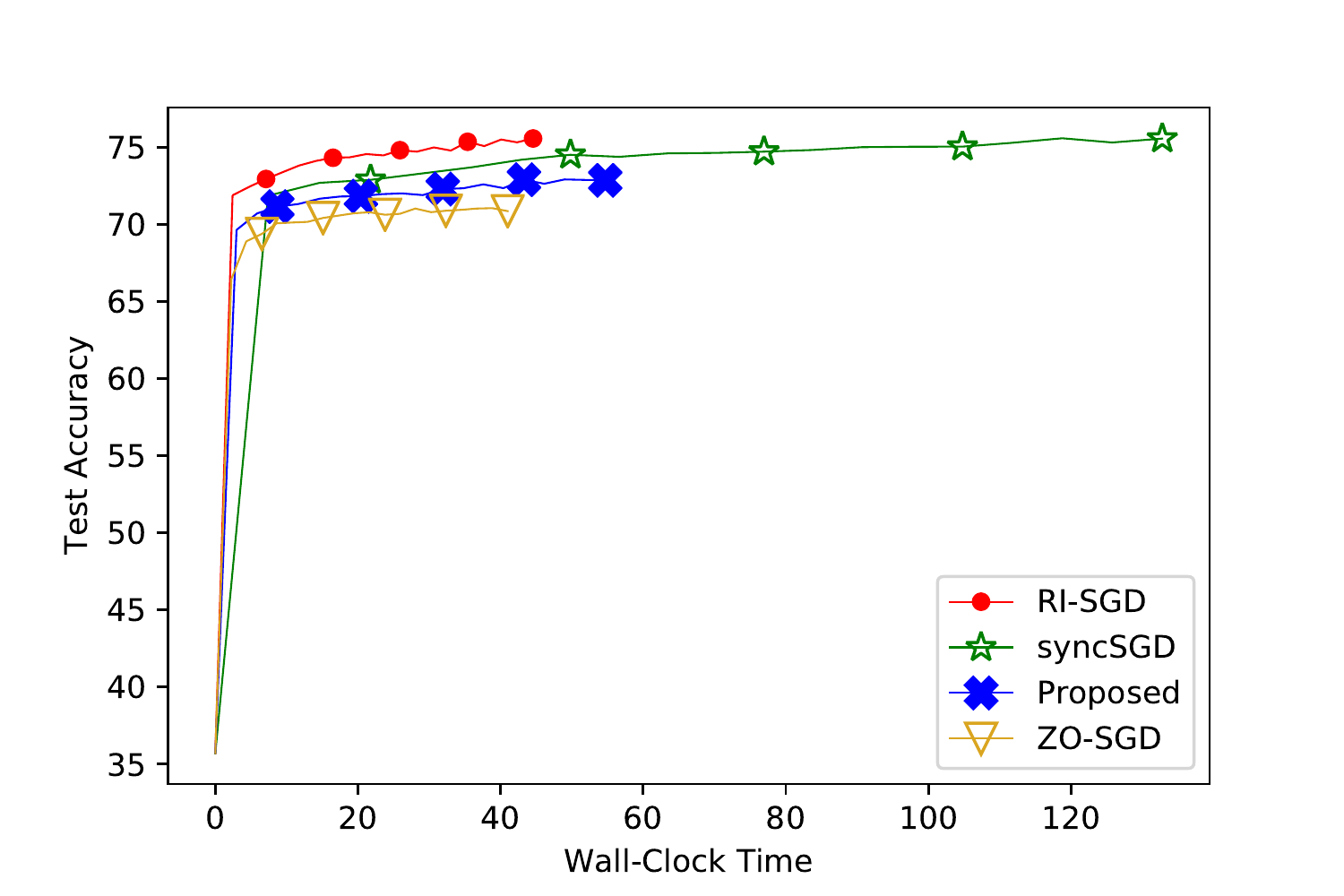}
	\end{minipage}
\caption{Training loss and test accuracy comparisons of different methods. The rows correspond to SENSORLESS, ACOUSTIC, COVTYPE, and SEISMIC datasets, from top to down, respectively. Wall-clock times are measured in seconds.}\label{fig: experiment2_all_datasets}
\end{figure*}

\clearpage

\section{Conclusion}

In this paper, we proposed a hybrid zeroth/first-order 
distributed SGD method for solving non-convex unconstrained stochastic optimization problems 
that benefits from low communication overhead and computational complexity, and yet converges fast. 
By theoretical analyses, we showed that the proposed algorithm reaches the same iteration complexity as the first-order distributed SGD algorithms, while it enjoys order-wisely lower computational complexity and comparable communication overhead.   Moreover, the proposed algorithm significantly outperforms the convergence rates of the existing zeroth-order algorithms, guaranteeing a convergence rate of the order of $ \mathcal{O} \left( d / \sqrt{m N} \right) $, with a comparable computational complexity. 
Experimental results 
demonstrate the effectiveness of the proposed approach compared to various state-of-the-art methods.


\bibliography{references}
\bibliographystyle{icml2020}


\newpage
\onecolumn






\begin{appendices}

\gdef\thesection{Appendix \Alph{section}}

\gdef\thesubsection{\Alph{section}.\arabic{subsection}}

\section{Problem Formulation of Generating Adversarial Examples from DNN}\label{sec:App A}


Here, we elaborate the details of  
the problem formulation for the  task of 
generating adversarial examples considered 
in Section \ref{sec: sim adversarial}.  
The task of generating a universal adversarial perturbation to $ K $ natural images can be regarded as an optimization problem of the form \eqref{eq: prob form 2}, in which $ F \left( \boldsymbol{x} , \boldsymbol{\zeta}_k \right) $ is the designed attack loss function of the $ k^\text{th} $ image, and is defined as \cite{carlini2017towards}  
\begin{align}
F \left( \boldsymbol{x} , \boldsymbol{\zeta}_k \right) = & c. \max \left\lbrace 0, f_{y_k} \left(  0.5 \tanh \left( \tanh^{-1} 2 \boldsymbol{a}_k + \boldsymbol{x} \right) \right) - \max_{j \neq y_j} f_{j} \left(  0.5 \tanh \left( \tanh^{-1} 2 \boldsymbol{a}_k + \boldsymbol{x} \right) \right)  \right\rbrace \notag \\
&+ \parallel 0.5 \tanh \left( \tanh^{-1} 2 \boldsymbol{a}_k + \boldsymbol{x} \right) - \boldsymbol{a}_k  \parallel^2_2, \notag
\end{align}
where $ \boldsymbol{\zeta}_k = \left( \boldsymbol{a}_k, y_k \right) $ in which, $ \boldsymbol{a}_k \in \left[ -0.5,0.5 \right]^d $ and $ y_k  $ denote 
the $ k^\text{th} $ natural image and its original class label, respectively. Furthermore, for a generated adversarial example $ \boldsymbol{z} $ and for each image class $ i=1,\cdots,I $, the function 
$  f_i \left( \boldsymbol{z} \right)  $ outputs prediction score of the model (e.g., log-probability) in that class. Note that in the above formulation, the $ \tanh $ operation is used to keep the generated adversarial example, i.e., $  \boldsymbol{z} = 0.5 \tanh \left( \tanh^{-1} 2 \boldsymbol{a}_k + \boldsymbol{x} \right) $, in the valid image space $ \left[ -0.5,0.5 \right]^d $. 
Finally,  $ c $ is a regularization parameter to balance a trade-off between the adversarial attack success and the $ l_2 $ distortion of the adversarial example. Note that lower values for $ c $ result in lower $ l_2 $ distortion which suggests better visual quality of the resulting adversarial examples, while higher values for $ c $ results in higher success rate of the adversarial attack (i.e., it can perturb more natural images to be misclassified).

\section{Convergence Proofs}\label{sec: app proofs} 



In order to prove 
 Theorem \ref{th: main}, we first define some auxiliary variables and functions, then we present and prove some useful lemmas which will be utilized for proving Theorem \ref{th: main}. Note that unless stated otherwise, the expectations are with respect to all the random variables.

\subsection{Preliminary Definitions}

We use the following smoothing scheme 
to approximate the first-order information of a given  function $ f $. 
\begin{definition}\label{def: f_mu}
For any smoothing parameter $ \mu > 0 $, the co-called smoothing function of any original function $ f \left( \boldsymbol{x} \right) $ is defined as \cite{darzentas1984problem}:  
\begin{equation}
f_\mu \left( \boldsymbol{x} \right) \triangleq \mathbb{E}_{\boldsymbol{u} \sim U_b} \left[ f \left( \boldsymbol{x} + \mu \boldsymbol{u} \right) \right],
\end{equation}
where 
$ U_b $ is the uniform distribution over the Euclidean $ d $-dimensional  unit  ball. 
\end{definition}

The above smoothing scheme 
exhibits several interesting features 
 that we shall use for the analysis of the convergence of the proposed method. 
 For example, as shown by Lemma 4.1(b) in \cite{gao2018information},  
for any function $ f \in C_L^1 \left( \mathbb{R}^d \right) $ (this condition is satisfied under \eqref{eq: Lips grad} in Assumption \ref{assump: Lipschitz grad + bounded grad}), we have
\begin{equation}\label{eq: f_mu - f}
| f_\mu \left( \boldsymbol{x} \right) - f \left( \boldsymbol{x} \right) | \leq \dfrac{\mu^2 L}{2},
\end{equation}
and
\begin{equation}\label{eq: grad f_mu - grad f}
\parallel \nabla f_\mu \left( \boldsymbol{x} \right) - \nabla f \left( \boldsymbol{x} \right) \parallel \leq \dfrac{\mu L d}{2}.
\end{equation}


\begin{definition}\label{def: tilde_f_G_delta}
Here, we define three functions that will be extensively used in the analysis of the proposed algorithm.

\begin{itemize}

\item[(a)] The 
function $ \tilde{f}_t \left( \boldsymbol{x} \right) $ indicates the 
considered function at each iteration of the proposed algorithm (that is being minimized by using its stochastic gradient direction): 
\begin{equation}\label{eq: tilde_f} 
\tilde{f}_t \left( \boldsymbol{x} \right) \triangleq \left\{ \begin{array}{ll}
{f \left( \boldsymbol{x} \right) }&{\mathrm{If} \mod\left(t,\tau\right)=0, }\\
\\
{f_\mu \left( \boldsymbol{x} \right)}&{\mathrm{Otherwise.}}
\end{array} 
\right.
\end{equation}

\item[(b)] Moreover, $ \tilde{\boldsymbol{G}}_t $ indicates the update direction, or equivalently, the gradient estimation of the objective function, utilized at each iteration of the proposed algorithm: 
\begin{equation}\label{eq: tilde_G} 
\tilde{\boldsymbol{G}}_t \left( \boldsymbol{x} \right) \triangleq \left\{ \begin{array}{ll}
{\dfrac{1}{Bm} \sum_{i=1}^m \sum_{b=1}^B \nabla F \left( \boldsymbol{x} , \boldsymbol{\zeta}_{t+1,i,b} \right) }&{\mathrm{If} \mod\left(t,\tau\right)=0, }\\
\\
{\dfrac{1}{Bm} \sum_{i=1}^m \sum_{b=1}^B \boldsymbol{G}_\mu \left( \boldsymbol{x} , \boldsymbol{\zeta}_{t+1,i,b} , \boldsymbol{v}_{t+1,i} \right) }&{\mathrm{Otherwise,}}
\end{array} 
\right.
\end{equation} 
where $ \boldsymbol{G}_\mu \left( \boldsymbol{x} , \boldsymbol{\zeta}_{t+1,i,b} , \boldsymbol{v}_{t+1,i} \right) $ is the zeroth-order gradient estimation and is defined as 
\begin{align}
\boldsymbol{G}_\mu \big(  \boldsymbol{x} , \boldsymbol{\zeta}_{t+1,i,b} , \boldsymbol{v}_{t+1,i} \big) \triangleq  \dfrac{d}{\mu} \Big[ F\left( \boldsymbol{x}^t + \mu \boldsymbol{v}_{t+1,i} , \boldsymbol{\zeta}_{t+1,i,b} \right) - F\left( \boldsymbol{x}^t , \boldsymbol{\zeta}_{t+1,i,b} \right) \Big] \boldsymbol{v}_{t+1,i}.  
\end{align}

\item[(c)] Finally, $ \tilde{\boldsymbol{\delta}}_t $ 
 indicates the difference between the utilized gradient estimation and the true gradient of the function considered at each iteration:  
\begin{equation}\label{eq: tilde_delta} 
\tilde{\boldsymbol{\delta}}_t \left( \boldsymbol{x} \right) \triangleq \nabla \tilde{f}_t \left( \boldsymbol{x} \right)  -  \tilde{\boldsymbol{G}}_t \left( \boldsymbol{x} \right). 
\end{equation}

\end{itemize}

\end{definition}

\subsection{Preliminary Lemmas}

First, note that according to the definition of $ \tilde{G}_t $ in \eqref{eq: tilde_G} and Assumption \ref{assump: unbiased 1st grad estimation + bounded var}, using 
Lemma 4.2 in \cite{gao2018information}, 
it can be shown that for all $ t \geq 0 $, 
\begin{equation}\label{eq: E [delta]=0}
\mathbb{E}_{ \boldsymbol{\zeta}_{t+1} , \boldsymbol{v}_{t+1}} \left[ \tilde{\delta}_t \left( \boldsymbol{x} \right) \right] = 0, ~~ \forall \boldsymbol{x} \in \mathbb{R}^d.
\end{equation}

In the following, some useful lemmas are in order.

\begin{lemma}\label{lem: f_tilde _t+1 - f_tilde _t}
For any $ L $-smooth function $ f \left( \boldsymbol{x} \right) $:  
$ ~\forall t \geq 0, ~ \forall \boldsymbol{x} \in \mathbb{R}^d $,
\begin{align}\label{eq: f_tilde _t+1 - f_tilde _t}
\tilde{f}_{t+1} \left( \boldsymbol{x} \right)  \leq \left\{ \begin{array}{ll}
{   \tilde{f}_{t} \left( \boldsymbol{x} \right) + \dfrac{\mu^2 L}{2}   \quad \quad }&{\mathrm{If}   \mod\left(t+1,\tau\right)=0 ~ XOR \mod\left(t,\tau\right)=0, }\\
\\
{  \tilde{f}_{t} \left( \boldsymbol{x} \right)  }&{\mathrm{Otherwise.}}
\end{array} 
\right.
\end{align}
\end{lemma}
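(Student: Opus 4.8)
The plan is to unwind the definition of $\tilde{f}_t$ in \eqref{eq: tilde_f} and split into the three possible cases for the pair $\left( \mod(t,\tau), \mod(t+1,\tau) \right)$, then in each case compare $\tilde{f}_{t+1}(\boldsymbol{x})$ and $\tilde{f}_t(\boldsymbol{x})$ using only the smoothing estimate \eqref{eq: f_mu - f}. First I would observe that exactly one of the following holds: (i) both $t$ and $t+1$ are multiples of $\tau$ — impossible unless $\tau=1$, in which case $\tilde{f}_{t+1}=\tilde{f}_t=f$ and the inequality holds with equality; (ii) neither is a multiple of $\tau$, so $\tilde{f}_{t+1}=\tilde{f}_t=f_\mu$ and again the bound holds trivially; (iii) exactly one of them is a multiple of $\tau$ — this is the ``XOR'' case, and it is the only one producing a nonzero gap.

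The core of the argument is case (iii). Here $\{\tilde{f}_t(\boldsymbol{x}),\tilde{f}_{t+1}(\boldsymbol{x})\} = \{f(\boldsymbol{x}), f_\mu(\boldsymbol{x})\}$ in one order or the other, so in either sub-case we have
\begin{equation}
\tilde{f}_{t+1}(\boldsymbol{x}) - \tilde{f}_t(\boldsymbol{x}) \in \left\{ f_\mu(\boldsymbol{x}) - f(\boldsymbol{x}),\; f(\boldsymbol{x}) - f_\mu(\boldsymbol{x}) \right\},
\end{equation}
and in both cases $\tilde{f}_{t+1}(\boldsymbol{x}) - \tilde{f}_t(\boldsymbol{x}) \leq |f_\mu(\boldsymbol{x}) - f(\boldsymbol{x})| \leq \frac{\mu^2 L}{2}$, where the last step is exactly \eqref{eq: f_mu - f}, which applies because Assumption \ref{assump: Lipschitz grad + bounded grad} gives $f \in C_L^1(\mathbb{R}^d)$. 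Rearranging yields $\tilde{f}_{t+1}(\boldsymbol{x}) \leq \tilde{f}_t(\boldsymbol{x}) + \frac{\mu^2 L}{2}$, which is the stated bound in the XOR case.

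For the ``Otherwise'' case (cases (i) and (ii) above), $\tilde{f}_{t+1}$ and $\tilde{f}_t$ are literally the same function (both $f$ or both $f_\mu$), so $\tilde{f}_{t+1}(\boldsymbol{x}) = \tilde{f}_t(\boldsymbol{x}) \leq \tilde{f}_t(\boldsymbol{x})$, completing the proof. Since the statement is really just bookkeeping on top of the known smoothing inequality \eqref{eq: f_mu - f}, I do not anticipate a genuine obstacle; the only point requiring a little care is being exhaustive about the case split — in particular noting that the two multiples-of-$\tau$ sub-possibilities only coexist when $\tau=1$, and confirming that the ``XOR'' condition in the lemma statement exactly captures the transitional iterations between a zeroth-order phase and a first-order step.
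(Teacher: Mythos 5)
Your proposal is correct and follows essentially the same argument as the paper: in the XOR case exactly one of $\tilde{f}_t,\tilde{f}_{t+1}$ equals $f$ and the other equals $f_\mu$, so \eqref{eq: f_mu - f} bounds their gap by $\mu^2 L/2$, while in all other cases the two functions coincide and the bound is trivial. Your extra remark that the "both multiples of $\tau$" sub-case only arises when $\tau=1$ is a harmless refinement of the same bookkeeping.
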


\begin{proof}

If one and only one of $ t $ and $ t+1 $ divides $ \tau $, it can be followed that exactly one of the two functions $ \tilde{f}_{t+1} \left( .\right) $ and $ \tilde{f}_{t} \left(. \right) $ is $ f \left( .\right) $ and the other one is $ f_\mu \left( .\right) $. Therefore, according to \eqref{eq: f_mu - f}, it can be concluded that the 
difference of these two functions at any point $ \boldsymbol{x} \in \mathbb{R}^d $ is upper-bounded by $ \dfrac{\mu^2 L}{2} $. Otherwise (i.e., if both $ t $ and $ t+1 $ divide $ \tau $ or both $ t $ and $ t+1 $ do not divide $ \tau $), then according to \eqref{eq: f_mu - f}, both the associated functions are the same, and hence, their difference 
is zero at any point $ \boldsymbol{x} \in \mathbb{R}^d $.
\end{proof}

\begin{remark}
When running Algorithm \ref{alg: proposed alg} for $ N $ iterations ($ t=0,\ldots,N-1 $), it can be shown that 
if $ \tau > 1 $, the first case of the above lemma occurs exactly $ 2 \left \lfloor{\dfrac{N-1}{\tau}}\right \rfloor - 1  $ times. Otherwise (i.e., if $ \tau = 1 $), the first case never occurs. This will be used later on in upper-bounding the error terms in the proof of the main theorem, i.e., Theorem \ref{th: main}.
\end{remark}

\begin{lemma}\label{lem: norm f_tilde ^2 - norm f ^2}
For any $ L $-smooth function $ f \left( \boldsymbol{x} \right) $: 
 $ ~\forall t \geq 0, ~ \forall \boldsymbol{x} \in \mathbb{R}^d $,
\begin{align}\label{eq: norm f_tilde ^2 - norm f ^2}
\parallel \nabla  \tilde{f}_t \left( \boldsymbol{x} \right) \parallel^2 \geq   \left\{ \begin{array}{ll}
{  \parallel \nabla f \left( \boldsymbol{x}^t \right) \parallel^2   }&{\mathrm{If} \mod\left(t,\tau\right)=0, }\\
\\
{  \dfrac{1}{2} \parallel \nabla f \left( \boldsymbol{x}^t \right) \parallel^2   - \dfrac{\mu^2 d^2 L^2 }{4} \quad \quad }&{\mathrm{Otherwise.}}
\end{array} 
\right.
\end{align}
\end{lemma}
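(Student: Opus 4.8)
The plan is to split along the two branches in the definition \eqref{eq: tilde_f} of $\tilde{f}_t$, exactly as in Lemma \ref{lem: f_tilde _t+1 - f_tilde _t}. In the first case, $\mod(t,\tau)=0$, so $\tilde{f}_t = f$ by definition, hence $\nabla \tilde{f}_t(\boldsymbol{x}^t) = \nabla f(\boldsymbol{x}^t)$ and the claimed inequality holds with equality; nothing further is required.

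The second case, $\mod(t,\tau)\neq 0$, is where $\tilde{f}_t = f_\mu$, so I need to lower-bound $\parallel \nabla f_\mu(\boldsymbol{x}^t)\parallel^2$ in terms of $\parallel \nabla f(\boldsymbol{x}^t)\parallel^2$. First I would recall the elementary inequality $\parallel \boldsymbol{a}\parallel^2 \geq \tfrac12 \parallel \boldsymbol{b}\parallel^2 - \parallel \boldsymbol{a}-\boldsymbol{b}\parallel^2$, valid for all $\boldsymbol{a},\boldsymbol{b}\in\mathbb{R}^d$, which follows from $\parallel \boldsymbol{b}\parallel^2 = \parallel (\boldsymbol{b}-\boldsymbol{a})+\boldsymbol{a}\parallel^2 \leq 2\parallel \boldsymbol{b}-\boldsymbol{a}\parallel^2 + 2\parallel \boldsymbol{a}\parallel^2$. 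Applying this with $\boldsymbol{a} = \nabla f_\mu(\boldsymbol{x}^t)$ and $\boldsymbol{b} = \nabla f(\boldsymbol{x}^t)$ gives
\[
\parallel \nabla f_\mu(\boldsymbol{x}^t)\parallel^2 \;\geq\; \tfrac12 \parallel \nabla f(\boldsymbol{x}^t)\parallel^2 \;-\; \parallel \nabla f_\mu(\boldsymbol{x}^t) - \nabla f(\boldsymbol{x}^t)\parallel^2 .
\]
Then I would invoke \eqref{eq: grad f_mu - grad f}, which (under the $L$-smoothness in Assumption \ref{assump: Lipschitz grad + bounded grad}) bounds $\parallel \nabla f_\mu(\boldsymbol{x}^t) - \nabla f(\boldsymbol{x}^t)\parallel \leq \tfrac{\mu L d}{2}$; squaring this and substituting into the display above produces the $\tfrac{\mu^2 d^2 L^2}{4}$ error term, which finishes the case.

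There is essentially no real obstacle here: the statement is an immediate consequence of the already-established gradient-smoothing estimate \eqref{eq: grad f_mu - grad f} together with one elementary norm inequality. The only point requiring a bit of care is choosing the split with coefficient $\tfrac12$ (rather than a general $1-\epsilon$), so that the residual is exactly $\parallel \nabla f_\mu - \nabla f\parallel^2$ and matches the squared bound from \eqref{eq: grad f_mu - grad f}. (A minor notational remark: although the statement is written for a generic $\boldsymbol{x}$, the right-hand side is evaluated at $\boldsymbol{x}^t$, so the bound is meant to be applied at the iterate $\boldsymbol{x}=\boldsymbol{x}^t$, which is how it will be used in the proof of Theorem \ref{th: main}.)
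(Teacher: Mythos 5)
Your proposal is correct and follows essentially the same route as the paper: the first case is the trivial identity $\tilde{f}_t=f$, and the second case combines the elementary inequality $\parallel \boldsymbol{b}\parallel^2 \leq 2\parallel \boldsymbol{a}\parallel^2 + 2\parallel \boldsymbol{a}-\boldsymbol{b}\parallel^2$ (which you just rearrange before applying) with the smoothing-gradient bound \eqref{eq: grad f_mu - grad f}, yielding exactly the constant $\mu^2 d^2 L^2/4$. Your rearranged form is in fact slightly cleaner, as it avoids the paper's intermediate display in which a factor of $2$ on the cross term is dropped before reappearing in the next line.
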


\begin{proof}
For the case when $ t $ satisfies $ \mod\left(t, \tau \right)=0 $, the result is obvious as we have  $ \tilde{f}_t \left( \boldsymbol{x} \right) = f \left( \boldsymbol{x} \right) $ (due to the definition in \eqref{eq: tilde_f}), and hence, $ \nabla \tilde{f}_t \left( \boldsymbol{x} \right) = \nabla f \left( \boldsymbol{x} \right) $. 
For the case when $ \mod\left(t, \tau \right) \neq 0 $, starting from \eqref{eq: grad f_mu - grad f} and using the fact that $ \parallel \boldsymbol{a} \parallel^2 \leq 2 \parallel \boldsymbol{b} \parallel^2 + 2 \parallel \boldsymbol{a}-\boldsymbol{b} \parallel^2 , ~ \forall \boldsymbol{a}, \boldsymbol{b} \in \mathbb{R}^d $, we have
\begin{align}
\parallel \nabla f \left( \boldsymbol{x} \right) \parallel^2 & \leq 2 \parallel \nabla \tilde{f}_t \left( \boldsymbol{x} \right) \parallel^2 + \parallel \nabla \tilde{f}_t \left( \boldsymbol{x} \right) - \nabla f \left( \boldsymbol{x} \right) \parallel^2, \notag \\
& \leq 2 \parallel \nabla \tilde{f}_t \left( \boldsymbol{x} \right) \parallel^2 + \dfrac{\mu^2 d^2 L^2}{2},
\end{align}
and consequently,
\begin{align}
\parallel \nabla \tilde{f}_t \left( \boldsymbol{x} \right) \parallel^2 \geq \dfrac{1}{2}\parallel \nabla f \left( \boldsymbol{x} \right) \parallel^2 - \dfrac{\mu^2 d^2 L^2}{4},
\end{align}
which is the desired result.

\end{proof}

\begin{lemma}\label{lem: second moment of g_tilde}
Under Assumptions \ref{assump: unbiased 1st grad estimation + bounded var}-\ref{assump: lower bound f*}, in the proposed Algorithm \ref{alg: proposed alg}, the second moment of the update direction \eqref{eq: update G tile t} can be bounded as: $ \forall t \geq 0 $, 
\begin{align}\label{eq: second moment of g_tilde}
\mathbb{E} \left[ \parallel \tilde{\boldsymbol{G}}_t \left( \boldsymbol{x}^t  \right)  \parallel^2 \right] \leq 
\left\{ \begin{array}{ll}
{ \hspace{-5 pt} \mathbb{E} \left[ \parallel \nabla f \left( \boldsymbol{x}^t \right) \parallel^2 \right]  + \dfrac{\sigma^2 }{ B m}  }&{\mathrm{If}  \mod\left(t,\tau\right)=0, }\\
\\
{ \hspace{-5 pt} \dfrac{ 2 \left( d+B m-1 \right) }{B m} ~ \mathbb{E} \left[ \parallel \nabla f \left( \boldsymbol{x}^t \right) \parallel^2 \right]   +  \dfrac{2 d \sigma^2}{B m} +\dfrac{\mu^2 L^2 d^2 }{2} \quad \quad }&{\mathrm{Otherwise,}}
\end{array} 
\right.
\end{align}

\end{lemma}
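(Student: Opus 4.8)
The plan is to treat the two cases separately, since in each case $\tilde{\boldsymbol{G}}_t$ is an average of $Bm$ i.i.d.\ unbiased (in case 1) or biased (in case 2) estimators, and the standard trick is to split a second moment into the squared norm of the mean plus a variance term that shrinks like $1/(Bm)$.

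\textbf{Case $\mod(t,\tau)=0$ (first-order step).} Here $\tilde{\boldsymbol{G}}_t(\boldsymbol{x}^t)=\frac{1}{Bm}\sum_{i,b}\nabla F(\boldsymbol{x}^t,\boldsymbol{\zeta}_{t+1,i,b})$, and by Assumption~\ref{assump: unbiased 1st grad estimation + bounded var} each summand is an unbiased estimator of $\nabla f(\boldsymbol{x}^t)$ with variance at most $\sigma^2$. I would condition on $\boldsymbol{x}^t$, write $\mathbb{E}\!\left[\|\tilde{\boldsymbol{G}}_t\|^2\right]=\|\mathbb{E}[\tilde{\boldsymbol{G}}_t]\|^2+\mathbb{E}\|\tilde{\boldsymbol{G}}_t-\mathbb{E}[\tilde{\boldsymbol{G}}_t]\|^2$, note $\mathbb{E}[\tilde{\boldsymbol{G}}_t\mid\boldsymbol{x}^t]=\nabla f(\boldsymbol{x}^t)$, and use independence of the $Bm$ samples so the variance of the average is $\frac{1}{Bm}$ times the per-sample variance, giving $\le\|\nabla f(\boldsymbol{x}^t)\|^2+\frac{\sigma^2}{Bm}$; taking total expectation finishes this case.

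\textbf{Case $\mod(t,\tau)\neq0$ (zeroth-order step).} Here $\tilde{\boldsymbol{G}}_t=\frac{1}{Bm}\sum_{i,b}\boldsymbol{G}_\mu(\boldsymbol{x}^t,\boldsymbol{\zeta}_{t+1,i,b},\boldsymbol{v}_{t+1,i})$. I would again split into $\|\mathbb{E}[\tilde{\boldsymbol{G}}_t]\|^2$ plus a variance term. By \eqref{eq: E [delta]=0} (and the preliminary remark), $\mathbb{E}[\tilde{\boldsymbol{G}}_t\mid\boldsymbol{x}^t]=\nabla f_\mu(\boldsymbol{x}^t)$, whose norm is controlled via \eqref{eq: grad f_mu - grad f} and the inequality $\|\boldsymbol{a}\|^2\le2\|\boldsymbol{b}\|^2+2\|\boldsymbol{a}-\boldsymbol{b}\|^2$ to get $\|\nabla f_\mu(\boldsymbol{x}^t)\|^2\le 2\|\nabla f(\boldsymbol{x}^t)\|^2+\frac{\mu^2 d^2L^2}{2}$. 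The variance term is the real work: using independence across the $Bm$ draws, $\mathbb{E}\|\tilde{\boldsymbol{G}}_t-\nabla f_\mu\|^2\le\frac{1}{Bm}\mathbb{E}\|\boldsymbol{G}_\mu(\boldsymbol{x}^t,\boldsymbol{\zeta},\boldsymbol{v})-\nabla f_\mu(\boldsymbol{x}^t)\|^2\le\frac{1}{Bm}\mathbb{E}\|\boldsymbol{G}_\mu\|^2$. To bound $\mathbb{E}\|\boldsymbol{G}_\mu\|^2$ I would appeal to the standard zeroth-order second-moment estimate (as in \cite{gao2018information,liu2018zeroth}): writing the finite difference in terms of $\nabla F$ along $\boldsymbol{v}$ plus an $O(\mu L)$ smoothing error, and using $\mathbb{E}_{\boldsymbol{v}}[(\boldsymbol{v}^\top\boldsymbol{g})^2\,\boldsymbol{v}\boldsymbol{v}^\top]$-type identities over the unit sphere together with Assumption~\ref{assump: unbiased 1st grad estimation + bounded var}, one obtains a bound of the form $c\,d\,(\mathbb{E}\|\nabla f(\boldsymbol{x}^t)\|^2+\sigma^2)+O(\mu^2 L^2 d^2)$. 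Combining the mean and variance pieces and collecting the factors $\frac{2(d+Bm-1)}{Bm}$, $\frac{2d\sigma^2}{Bm}$, $\frac{\mu^2L^2d^2}{2}$ yields the claimed bound.

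\textbf{Main obstacle.} The delicate part is the explicit second-moment bound on the single-sample zeroth-order estimator $\boldsymbol{G}_\mu$, i.e.\ extracting exactly the constant $2(d+Bm-1)/(Bm)$ in front of $\mathbb{E}\|\nabla f\|^2$. This requires carefully tracking how the $d/\mu$ scaling, the spherical-direction averaging, and the $1/(Bm)$ from averaging interact — in particular one must separate the contribution of the ``self'' term (the $\|\nabla f_\mu\|^2$ piece, giving the $Bm-1$) from the ``cross/variance'' term (giving the extra $d$), rather than bounding crudely. I expect this to follow from a lemma of \cite{gao2018information} or \cite{liu2018zeroth} applied with the finite-sum/batch structure, but matching constants precisely is where the care is needed; everything else is the routine bias–variance decomposition.
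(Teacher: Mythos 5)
Your proposal follows essentially the same route as the paper's proof: the bias--variance decomposition using the unbiasedness of $\tilde{\boldsymbol{G}}_t$ for $\nabla\tilde f_t$, independence across the $Bm$ draws so the variance scales by $\tfrac{1}{Bm}$, the per-sample zeroth-order second-moment bound $\mathbb{E}_{\boldsymbol{v}}\big[\|\boldsymbol{G}_\mu\|^2\big]\le 2d\,\|\nabla F(\boldsymbol{x}^t,\boldsymbol{\zeta})\|^2+\tfrac{\mu^2L^2d^2}{2}$ (Proposition 6.6 of \cite{gao2018information}) combined with Assumption \ref{assump: unbiased 1st grad estimation + bounded var}, and finally $\|\nabla f_\mu(\boldsymbol{x}^t)\|^2\le 2\|\nabla f(\boldsymbol{x}^t)\|^2+\tfrac{\mu^2d^2L^2}{2}$ from \eqref{eq: grad f_mu - grad f}. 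The one detail to fix is the step where you write $\mathbb{E}\|\boldsymbol{G}_\mu-\nabla f_\mu\|^2\le\mathbb{E}\|\boldsymbol{G}_\mu\|^2$: that crude version yields the slightly weaker constants $\tfrac{2(d+Bm)}{Bm}$ and $\tfrac{\mu^2L^2d^2}{2}\big(1+\tfrac{1}{Bm}\big)$, whereas the paper keeps the exact per-sample identity $\mathbb{E}\|\boldsymbol{G}_\mu-\nabla f_\mu\|^2=\mathbb{E}\|\boldsymbol{G}_\mu\|^2-\|\nabla f_\mu\|^2$, so the mean contribution appears with coefficient $\tfrac{Bm-1}{Bm}$ and the stated bound $\tfrac{2(d+Bm-1)}{Bm}\,\mathbb{E}\|\nabla f\|^2+\tfrac{2d\sigma^2}{Bm}+\tfrac{\mu^2L^2d^2}{2}$ comes out exactly --- which is precisely the ``self-term'' refinement you already flagged in your obstacle paragraph, so this is a detail to carry out rather than a conceptual gap.
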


\begin{proof}
First note that according to \eqref{eq: tilde_delta} and \eqref{eq: E [delta]=0}, we have 
\begin{align}\label{eq: E tilde_G^2}
\mathbb{E}  \left[ \parallel \tilde{\boldsymbol{G}}_t \left( \boldsymbol{x}^t  \right) \parallel^2 \right]  & = \mathbb{E}_{\Omega_t} \Bigg[ \mathbb{E}_{\boldsymbol{\zeta}_{t+1} , \boldsymbol{v}_{t+1}} \left[ \parallel \tilde{\boldsymbol{G}}_t \left( \boldsymbol{x}^t  \right) \parallel^2 \left| \Omega_t \right. \right]  \Bigg] , \notag \\
& = \mathbb{E}_{\Omega_t} \Bigg[ \mathbb{E}_{\boldsymbol{\zeta}_{t+1} , \boldsymbol{v}_{t+1}} \Big[ \parallel \tilde{\boldsymbol{G}}_t \left( \boldsymbol{x}^t  \right) - \mathbb{E}_{\boldsymbol{\zeta}_{t+1} , \boldsymbol{v}_{t+1}} \left[ \tilde{\boldsymbol{G}}_t \left( \boldsymbol{x}^t  \right) \right] \parallel^2  + \parallel \mathbb{E}_{\boldsymbol{\zeta}_{t+1} , \boldsymbol{v}_{t+1}} \left[ \tilde{\boldsymbol{G}}_t \left( \boldsymbol{x}^t  \right) \right] \parallel^2 \left| \Omega_t \right. \Big] \Bigg],  \notag \\
& = \mathbb{E}_{\Omega_t} \Bigg[ \mathbb{E}_{\boldsymbol{\zeta}_{t+1} , \boldsymbol{v}_{t+1}} \left[ \parallel \tilde{\boldsymbol{G}}_t \left( \boldsymbol{x}^t  \right) - \nabla \tilde{f}_t \left( \boldsymbol{x}^t \right) \parallel^2  \left| \Omega_t \right.  \right]  + \parallel \nabla \tilde{f}_t \left( \boldsymbol{x}^t \right) \parallel^2  \Bigg],  \notag \\
& = \mathbb{E} \left[ \parallel \tilde{\boldsymbol{G}}_t \left( \boldsymbol{x}^t  \right) - \nabla \tilde{f}_t \left( \boldsymbol{x}^t \right)  \parallel^2 \right] + \mathbb{E} \left[ \parallel \nabla \tilde{f}_t \left( \boldsymbol{x}^t \right)  \parallel^2 \right] .
\end{align}

Now, note that at each iteration $ t $, one of the following two cases occurs. In the following, we expand the last equality above for each of these two cases, separately.  

\textbf{Case 1:} If $ \mod(t, \tau)=0 $, then 
substituting \eqref{eq: tilde_G} in \eqref{eq: E tilde_G^2}, we have 
\begin{align}
\mathbb{E}  \left[ \parallel \tilde{\boldsymbol{G}}_t \left( \boldsymbol{x}^t  \right)  \parallel^2 \right]  
& =  \mathbb{E} \left[ \parallel \dfrac{1}{Bm} \sum_{i=1}^m \sum_{b=1}^B \left( \nabla F \left( \boldsymbol{x}^t , \boldsymbol{\zeta}_{t+1,i,b} \right) - \nabla f \left( \boldsymbol{x}^t \right) \right) \parallel^2 \right]  + \mathbb{E} \left[ \parallel \nabla f \left( \boldsymbol{x}^t \right)  \parallel^2 \right] , \notag \\
& \stackrel{\text{(a)}}{=} \dfrac{1}{B^2 m^2} \sum_{i=1}^m \sum_{b=1}^B \mathbb{E} \left[ \parallel \nabla F \left( \boldsymbol{x}^t , \boldsymbol{\zeta}_{t+1,i,b} \right) - \nabla f \left( \boldsymbol{x}^t \right) \parallel^2 \right] + \mathbb{E} \left[ \parallel \nabla f \left( \boldsymbol{x}^t \right)  \parallel^2 \right] , \notag \\
& \stackrel{\text{(b)}}{\leq}  \dfrac{\sigma^2}{Bm} + \mathbb{E} \left[ \parallel \nabla f \left( \boldsymbol{x}^t \right)  \parallel^2 \right] ,
\end{align} 
where equation (a) results from independence of observed random samples 
$ \boldsymbol{\zeta}_{t+1,i,b}, ~ \forall i=1,\ldots,m, \forall b=1,\ldots,B $ and \eqref{eq: 1st grad unbiased} in Assumption \ref{assump: unbiased 1st grad estimation + bounded var}, and inequality (b) is due to \eqref{eq: 1st grad bounded var} in Assumption \ref{assump: unbiased 1st grad estimation + bounded var}.

\textbf{Case 2:} If $ \mod(t, \tau) \neq 0 $,  then 
substituting \eqref{eq: tilde_G} in \eqref{eq: E tilde_G^2} results in 
\begin{align}
\mathbb{E}  \left[ \parallel \tilde{\boldsymbol{G}}_t \left( \boldsymbol{x}^t  \right)  \parallel^2 \right]  & = \mathbb{E} \left[ \parallel \dfrac{1}{Bm} \sum_{i=1}^m \sum_{b=1}^B\left( \boldsymbol{G}_\mu \left( \boldsymbol{x}^t , \boldsymbol{\zeta}_{t+1,i,b} , \boldsymbol{v}_{t+1,i} \right)  - \nabla \tilde{f}_\mu \left( \boldsymbol{x}^t \right) \right) \parallel^2 \right]   + \mathbb{E} \left[ \parallel \nabla \tilde{f}_\mu \left( \boldsymbol{x}^t \right)  \parallel^2 \right] , \notag \\
& \stackrel{\text{(a)}}{=} \dfrac{1}{B^2m^2} \sum_{i=1}^m \sum_{b=1}^B  \mathbb{E} \left[ \parallel \boldsymbol{G}_\mu \left( \boldsymbol{x}^t , \boldsymbol{\zeta}_{t+1,i,b} , \boldsymbol{v}_{t+1,i} \right) - \nabla \tilde{f}_\mu \left( \boldsymbol{x}^t \right) \parallel^2 \right] + \mathbb{E} \left[ \parallel \nabla \tilde{f}_\mu \left( \boldsymbol{x}^t \right)  \parallel^2 \right] 
, \notag \\
& = \dfrac{1}{B^2m^2} \sum_{i=1}^m \mathbb{E}_{\Omega_t} \Bigg[ \mathbb{E}_{\boldsymbol{\zeta}_{t+1} , \boldsymbol{v}_{t+1}} \Big[ \parallel \boldsymbol{G}_\mu \left( \boldsymbol{x}^t , \boldsymbol{\zeta}_{t+1,i,b} , \boldsymbol{v}_{t+1,i} \right)    - \nabla \tilde{f}_\mu \left( \boldsymbol{x}^t \right)  \parallel^2 \left| \Omega_t \right. \Big]  \Bigg] \notag \\
& \quad + \mathbb{E} \left[ \parallel \nabla \tilde{f}_\mu \left( \boldsymbol{x}^t \right)  \parallel^2 \right]  , \notag \\
& \stackrel{\text{(b)}}{=} \dfrac{1}{B^2m^2} \sum_{i=1}^m \sum_{b=1}^B \mathbb{E}_{\Omega_t} \Bigg[ \mathbb{E}_{\boldsymbol{\zeta}_{t+1} , \boldsymbol{v}_{t+1}} \Big[ \parallel \boldsymbol{G}_\mu \left( \boldsymbol{x}^t , \boldsymbol{\zeta}_{t+1,i,b} , \boldsymbol{v}_{t+1,i} \right) \parallel^2 \Big] - \parallel \nabla \tilde{f}_\mu \left( \boldsymbol{x}^t \right)  \parallel^2  \Bigg] \notag \\
& \quad + \mathbb{E} \left[ \parallel \nabla \tilde{f}_\mu \left( \boldsymbol{x}^t \right)  \parallel^2 \right] ,   \notag \\
& = \dfrac{1}{B^2m^2} \sum_{i=1}^m \sum_{b=1}^B \mathbb{E}_{\Omega_t} \Bigg[ \mathbb{E}_{\boldsymbol{\zeta}_{t+1}} \Big[ \mathbb{E}_{\boldsymbol{\boldsymbol{v}_{t+1}}} \left[ \parallel \boldsymbol{G}_\mu \left( \boldsymbol{x}^t , \boldsymbol{\zeta}_{t+1,i,b} , \boldsymbol{v}_{t+1,i} \right) \parallel^2 \right] \Big]    - \parallel \nabla \tilde{f}_\mu \left( \boldsymbol{x}^t \right)  \parallel^2  \Bigg] \notag \\
& \quad + \mathbb{E} \left[ \parallel \nabla \tilde{f}_\mu \left( \boldsymbol{x}^t \right)  \parallel^2 \right] , 
\notag \\
& \stackrel{\text{(c)}}{\leq} \dfrac{1}{B^2m^2} \sum_{i=1}^m  \sum_{b=1}^B \mathbb{E}_{\Omega_t} \Bigg[ \mathbb{E}_{\boldsymbol{\zeta}_{t+1}} \Big[ 2 d  \parallel  \nabla F \left( \boldsymbol{x}^t , \boldsymbol{\zeta}_{t+1,i,b}  \right) \parallel^2 + \dfrac{\mu^2 L^2 d^2}{2} \Big]   - \parallel \nabla \tilde{f}_\mu \left( \boldsymbol{x}^t \right)  \parallel^2  \Bigg] \notag \\
& \quad + \mathbb{E} \left[ \parallel \nabla \tilde{f}_\mu \left( \boldsymbol{x}^t \right)  \parallel^2 \right] , \notag \\
& \stackrel{\text{(d)}}{=} \dfrac{1}{B^2m^2} \sum_{i=1}^m \sum_{b=1}^B \mathbb{E}_{\Omega_t} \Bigg[ 2 d ~ \mathbb{E}_{\boldsymbol{\zeta}_{t+1}} \Big[  \parallel \nabla F \left( \boldsymbol{x}^t , \boldsymbol{\zeta}_{t+1,i,b}  \right) -  \nabla f \left( \boldsymbol{x}^t \right)  \parallel^2 \Big]    + 2 d   \parallel \nabla f \left( \boldsymbol{x}^t  \right) \parallel^2 \hspace{-2 pt} + \dfrac{\mu^2 L^2 d^2}{2}  
\notag \\
& \hspace{100 pt} - \parallel \nabla \tilde{f}_\mu \left( \boldsymbol{x}^t \right)  \parallel^2   \Bigg]   + \mathbb{E} \left[ \parallel \nabla \tilde{f}_\mu \left( \boldsymbol{x}^t \right)  \parallel^2 \right] , 
\notag \\
& \stackrel{\text{(e)}}{\leq} \dfrac{1}{B^2 m^2} \sum_{i=1}^m  \sum_{b=1}^B \mathbb{E}_{\Omega_t} \Bigg[ 2 d  \sigma^2 + 2 d   \parallel \nabla f \left( \boldsymbol{x}^t  \right) \parallel^2  + \dfrac{\mu^2 L^2 d^2}{2} - \parallel \nabla \tilde{f}_\mu \left( \boldsymbol{x}^t \right)  \parallel^2  \Bigg] \hspace{-4 pt} + \mathbb{E} \left[ \parallel \nabla \tilde{f}_\mu \left( \boldsymbol{x}^t \right)  \parallel^2 \right] , 
\notag \\
& = \dfrac{2d}{Bm}  ~ \mathbb{E} \left[ \parallel \nabla f \left( \boldsymbol{x}^t  \right) \parallel^2 \right] + \dfrac{2 d  \sigma^2}{Bm} + \dfrac{\mu^2 L^2 d^2}{2Bm} + \dfrac{Bm-1}{Bm} \mathbb{E} \left[ \parallel \nabla \tilde{f}_\mu \left( \boldsymbol{x}^t \right)  \parallel^2 \right] , 
\notag \\
& \stackrel{\text{(f)}}{\leq} \dfrac{ 2 \left( d + Bm - 1 \right) }{Bm}  \mathbb{E} \left[ \parallel \nabla f \left( \boldsymbol{x}^t  \right) \parallel^2 \right] + \dfrac{2 d  \sigma^2}{Bm} + \dfrac{\mu^2 L^2 d^2}{2} , 
\end{align} 
where $ \boldsymbol{\zeta}_{t+1} \triangleq \left\lbrace \boldsymbol{\zeta}_{t+1,i,b} \right\rbrace_{i = 1,\ldots, m, b=1,\ldots,B } $, $ \boldsymbol{v}_{t+1} \triangleq \left\lbrace \boldsymbol{v}_{t+1,i} \right\rbrace_{i = 1,\ldots, m} $, and $ \Omega_{t} \triangleq \left\lbrace \boldsymbol{\zeta}_{j,i,b}, \boldsymbol{v}_{j,i} \right\rbrace_{j=0,\ldots,t , ~ i = 1,\ldots, m, ~ b=1,\ldots,B} $.
Note that equation (a) results from independence of random vectors  $ \left\lbrace \boldsymbol{\zeta}_{t+1,i,b} , \boldsymbol{v}_{t+1,i}  , \forall i=1,\ldots,m, \forall b=1,\ldots, B \right \rbrace $ 
 and equation \eqref{eq: E [delta]=0}. Moreover, equation (b) comes from \eqref{eq: tilde_delta} and \eqref{eq: E [delta]=0}. Inequality (c) comes from the definition of $ \boldsymbol{G}_\mu $ along with  Proposition 6.6. in \cite{gao2018information}.  
Furthermore, equation (d) and inequality (e) come from \eqref{eq: 1st grad unbiased} and \eqref{eq: 1st grad bounded var}  in Assumption \ref{assump: unbiased 1st grad estimation + bounded var}, respectively. Finally, inequality (f) comes from \eqref{eq: grad f_mu - grad f} and the fact that $ \parallel \boldsymbol{a} \parallel^2 \leq 2 \parallel \boldsymbol{b} \parallel^2 + 2 \parallel \boldsymbol{a}-\boldsymbol{b} \parallel^2 , ~ \forall \boldsymbol{a}, \boldsymbol{b} \in \mathbb{R}^d $.

\end{proof}

\subsection{Proof of Theorem \ref{th: main}}

Having the above 
lemmas, we are now ready to prove the main theorem. 
First note that since the objective function $ f \left( \boldsymbol{x} \right) $ is $ L $-smooth, 
its smoothing function $ f_\mu \left( \boldsymbol{x} \right) $ is also $ L $-smooth \cite{shalev2012online}.  
Therefore, it is concluded from definition \eqref{eq: tilde_f} that the function $ \tilde{f}_t \left( \boldsymbol{x} \right), ~ \forall t \geq 0 $ is also $ L $-smooth. As such, from the properties of smooth functions \cite{bottou2018optimization}, it is concluded that for any $ t \geq 0 $,
\begin{align}\label{eq: L smooth basic ineq}
\tilde{f}_t \left( \boldsymbol{x}^{t+1} \right) & \leq \tilde{f}_t \left( \boldsymbol{x}^{t} \right) + \langle \nabla \tilde{f}_t \left( \boldsymbol{x}^{t} \right) ,  \boldsymbol{x}^{t+1} - \boldsymbol{x}^{t} \rangle    + \dfrac{L}{2} \parallel \boldsymbol{x}^{t+1} - \boldsymbol{x}^{t} \parallel^2, \notag \\
& = \tilde{f}_t \left( \boldsymbol{x}^{t} \right) -  \alpha_t \langle \nabla \tilde{f}_t \left( \boldsymbol{x}^{t} \right) , \tilde{\boldsymbol{G}}_t \rangle + \dfrac{L  \alpha_t^2 }{2} \hspace{-3 pt} \parallel \tilde{\boldsymbol{G}}_t \parallel^2 \hspace{-3 pt},
\end{align}
where the  equality is due to the update equation \eqref{eq: update x} in Algorithm \ref{alg: proposed alg}. Moreover, combining \eqref{eq: f_mu - f} and \eqref{eq: tilde_f} results in
\begin{align}
\tilde{f}_{t+1} \left( \boldsymbol{x}^{t+1} \right)  \leq \tilde{f}_t \left( \boldsymbol{x}^{t+1} \right) + \dfrac{L \mu^2}{2}.
\end{align}
Therefore, summing up the above two inequalities results in
\begin{align}
\tilde{f}_{t+1} \left( \boldsymbol{x}^{t+1} \right) &\leq \tilde{f}_t \left( \boldsymbol{x}^{t} \right) -  \alpha_t \langle \nabla \tilde{f}_t \left( \boldsymbol{x}^{t} \right) , \tilde{\boldsymbol{G}}_t \rangle + \dfrac{L  \alpha_t^2 }{2} \parallel \tilde{\boldsymbol{G}}_t \parallel^2.
\end{align}
Now, by applying expectation to both sides of the above inequality, it is concluded that
\begin{align}\label{eq: tilde_f_t(x^t+1) <= tilde_f_t(x^t) +...}
\mathbb{E} \Big[  \tilde{f}_{t+1}  \left( \boldsymbol{x}^{t+1} \right) \Big] 
& \leq \mathbb{E} \left[\tilde{f}_t \left( \boldsymbol{x}^{t} \right) \right] -  \alpha_t \mathbb{E} \left[\langle \nabla \tilde{f}_t \left( \boldsymbol{x}^{t} \right) , \tilde{\boldsymbol{G}}_t \rangle \right]   + \dfrac{L  \alpha_t^2 }{2} \mathbb{E} \left[ \parallel \tilde{\boldsymbol{G}}_t \parallel^2 \right], \notag \\
& = \mathbb{E} \left[\tilde{f}_t \left( \boldsymbol{x}^{t} \right) \right] + \dfrac{L  \alpha_t^2 }{2} \mathbb{E} \left[ \parallel \tilde{\boldsymbol{G}}_t \parallel^2 \right]  -  \alpha_t \mathbb{E}_{\Omega_{t}} \left[ \mathbb{E}_{\boldsymbol{\zeta}_{t+1},\boldsymbol{v}_{t+1}} \left[\langle \nabla \tilde{f}_t \left( \boldsymbol{x}^{t} \right) , \tilde{\boldsymbol{G}}_t \rangle \Big| \Omega_{t} \right] \right] , \notag \\
& \stackrel{\text{(a)}}{=} \mathbb{E} \left[\tilde{f}_t \left( \boldsymbol{x}^{t} \right) \right] + \dfrac{L  \alpha_t^2 }{2} \mathbb{E} \left[ \parallel \tilde{\boldsymbol{G}}_t \parallel^2 \right]  -  \alpha_t \mathbb{E}_{\Omega_{t}} \left[ \langle \nabla \tilde{f}_t \left( \boldsymbol{x}^{t} \right) , \mathbb{E}_{\boldsymbol{\zeta}_{t+1},\boldsymbol{v}_{t+1}} \left[ \tilde{\boldsymbol{G}}_t \Big| \Omega_{t} \right] \rangle  \right] , \notag \\
& \stackrel{\text{(b)}}{=} \mathbb{E} \left[\tilde{f}_t \left( \boldsymbol{x}^{t} \right) \right]  + \dfrac{L  \alpha_t^2 }{2} \mathbb{E} \left[ \parallel \tilde{\boldsymbol{G}}_t \parallel^2 \right]  -  \alpha_t \mathbb{E}_{\Omega_{t}} \left[ \parallel \nabla \tilde{f}_t \left( \boldsymbol{x}^{t} \right) \parallel^2  \right],
\end{align}
where equality (a) is due to independence of $ \nabla \tilde{f}_t \left( \boldsymbol{x}^{t} \right)  $ to the random vectors $ \boldsymbol{\zeta}_{t+1} $ and $ \boldsymbol{v}_{t+1} $, and equality (b) follows from  combining \eqref{eq: tilde_delta}  and \eqref{eq: E [delta]=0}. 
Using Lemmas \ref{lem: norm f_tilde ^2 - norm f ^2} and \ref{lem: second moment of g_tilde}, we can further bound the second and third terms in the right hand side of \eqref{eq: tilde_f_t(x^t+1) <= tilde_f_t(x^t) +...}, as follows.

{\small{
\begin{align}\label{eq: v2_tilde_f_t(x^t+1) <= tilde_f_t(x^t) +...}
\mathbb{E} & \left[ \tilde{f}_{t} \left( \boldsymbol{x}^{t+1} \right) \right] 
  \leq \notag \\
&\left\{ \begin{array}{ll}
{ \hspace{-5 pt}   \mathbb{E} \left[\tilde{f}_t \left( \boldsymbol{x}^{t} \right) \right] -  \left( \alpha_t - \dfrac{L}{2}  \alpha_t^2 \right) \mathbb{E} \left[ \parallel \nabla f \left( \boldsymbol{x}^{t} \right) \parallel^2  \right] + \dfrac{ L \sigma^2 }{2 B m} \alpha_t^2  }&{ \hspace{-7 pt} \mathrm{If} \hspace{-6 pt} \mod \hspace{-5 pt} \left(t,\tau\right)=0, }\\
\\
{  \hspace{-5 pt}  \mathbb{E} \left[\tilde{f}_t \left( \boldsymbol{x}^{t} \right) \right] -  \left( \dfrac{1}{2}\alpha_t - \dfrac{\left( d+ B m-1\right)L}{ B m}  \alpha_t^2 \right) \mathbb{E} \left[ \parallel \nabla f \left( \boldsymbol{x}^{t} \right) \parallel^2  \right] + \dfrac{ \mu^2 d^2 L^2 }{4} \alpha_t \hspace{-2 pt} + \hspace{-2 pt} \left( \dfrac{d L \sigma^2}{ B m} + \dfrac{\mu^2 d^2 L^3}{4} \right) \alpha_t^2  \quad }&{\hspace{-5 pt}\mathrm{Otherwise.}}
\end{array} 
\right.
\end{align}
}}

Now, summing up the inequality \eqref{eq: v2_tilde_f_t(x^t+1) <= tilde_f_t(x^t) +...} for all $ t=0,\ldots,N-1 $, applying
 Lemma \ref{lem: f_tilde _t+1 - f_tilde _t}, and using the telescopic rule result in
\begin{align}\label{eq: f_tilde_N <= f_tilde_0 + ...}
\mathbb{E} \left[ \tilde{f}_{N} \left( \boldsymbol{x}^{N} \right) \right] & \leq \mathbb{E} \left[ \tilde{f}_{0} \left( \boldsymbol{x}^{0} \right) \right] - \sum_{t=0}^{N-1} \beta_t \mathbb{E} \left[ \parallel \nabla f \left( \boldsymbol{x}^{t} \right) \parallel^2 \right]   + \mathbf{1} \left(\tau > 1 \right)  \left( 2 \left \lfloor{\dfrac{N-1}{\tau}}\right \rfloor - 1 \right) \dfrac{\mu^2 L}{2}  + \sum_{t=0}^{N-1} A_t, 
\notag \\
& \stackrel{\text{(a)}}{\leq} \mathbb{E} \left[ \tilde{f}_{0} \left( \boldsymbol{x}^{0} \right) \right] - \sum_{t=0}^{N-1} \beta_t \mathbb{E} \left[ \parallel \nabla f \left( \boldsymbol{x}^{t} \right) \parallel^2 \right]  + \mathbf{1} \left(\tau > 1 \right) \dfrac{\mu^2 L N }{\tau}  + \sum_{t=0}^{N-1} A_t
\end{align}
where $ \mathbf{1} \left(\tau > 1 \right) $ is an indicator function which takes non-zero value of $ 1 $ only when its input argument is true; otherwise, it takes $ 0 $. Therefore, the third term in the right hand side of the above inequality does not exist if the period $ \tau  $ is set to $ 1 $.{\footnote{Note that this corresponds to the case where the worker nodes use SFO estimation in all the iterations of Algorithm \ref{alg: proposed alg}. Therefore, the proposed Algorithm \ref{alg: proposed alg} 
reduces to the method of 
 synchronous distributed  SGD described in \cite{wang2018cooperative}.}} Moreover, inequality (a) is due to the basic property of the floor function. 
Finally, for any $ t \geq 0 $, the parameters $ \beta_t $ and $ A_t $ in the above inequality are defined as  
\begin{align}\label{eq: gamma_t}
\beta_t \triangleq \left\{ \begin{array}{ll}
{ \alpha_t - \dfrac{L}{2}  \alpha_t^2 }&{\mathrm{If} \mod\left(t,\tau\right)=0, }\\
\\
{ \dfrac{1}{2}\alpha_t - \dfrac{\left( d+ B m-1\right)L}{ B m}  \alpha_t^2 \quad \quad }&{\mathrm{Otherwise,}}
\end{array} 
\right.
\end{align}
 and
\begin{align}\label{eq: X_t}
A_t \triangleq \left\{ \begin{array}{ll}
{ \dfrac{ L \sigma^2 }{2 B m} \alpha_t^2 }&{\mathrm{If} \mod\left(t,\tau\right)=0, }\\
\\
{ \dfrac{ \mu^2 d^2 L^2 }{4} \alpha_t + \left( \dfrac{d L \sigma^2}{ B m} + \dfrac{\mu^2 d^2 L^3}{4} \right) \alpha_t^2  \quad \quad }&{\mathrm{Otherwise,}}
\end{array} 
\right.
\end{align}
respectively. 

Using Definition \eqref{eq: tilde_f}, we can write inequality \eqref{eq: f_mu - f} for $ t=0 $ and $ t=N $ as follows:  
\begin{align}\label{eq: f_tilde_0(0) - f(0)}
\tilde{f}_0 \left( \boldsymbol{x}^0 \right) - f \left( \boldsymbol{x}^0 \right) \leq \dfrac{\mu^2 L}{2},
\end{align}
and
\begin{align}\label{eq: f(N) - f_tilde_N(N) }
f \left( \boldsymbol{x}^N \right) - \tilde{f}_N \left( \boldsymbol{x}^N \right) \leq \dfrac{\mu^2 L}{2},
\end{align}
respectively. 
Moreover, note that in the special case of $ \tau=1 $, we have $ \mod{\left( t,\tau \right)}=0, ~ \forall t $, and hence, according to Definition \ref{eq: tilde_f}, we have $ \tilde{f}_t \left( \boldsymbol{x} \right) - f \left( \boldsymbol{x} \right) = 0 $. As such, in this case, we can further tighten the upper-bounds in \eqref{eq: f_tilde_0(0) - f(0)} and \eqref{eq: f(N) - f_tilde_N(N) } and write them as equal to  $ 0 $. Consequently, combining these upper-bounds and then using Assumption \ref{assump: lower bound f*} results in 
\begin{align}\label{eq: f^0 - f* >= ...}
\tilde{f}_0 \left( \boldsymbol{x}^0 \right) - \tilde{f}_N \left( \boldsymbol{x}^N \right) & \leq f \left( \boldsymbol{x}^0 \right) - f \left( \boldsymbol{x}^N \right) + \mathbf{1} \left( \tau > 1 \right) \mu^2 L, \notag \\
& \leq f \left( \boldsymbol{x}^0 \right) - f^\ast + \mathbf{1} \left( \tau > 1 \right) \mu^2 L . 
\end{align}
Taking expectation from both sides of \eqref{eq: f^0 - f* >= ...} and then combining it with \eqref{eq: f_tilde_N <= f_tilde_0 + ...}  concludes that 
\begin{align}\label{eq: f_tilde_N <= f_tilde_0 + ... with gamma_t}
\sum_{t=0}^{N-1}  \beta_t ~ \mathbb{E} \left[ \parallel \nabla f \left( \boldsymbol{x}^{t} \right) \parallel^2 \right]  
 &\leq f \left( \boldsymbol{x}^0 \right)  -  f^\ast + \mathbf{1} \left(\tau > 1 \right) \left( \dfrac{\mu^2 L N }{\tau}   + \mu^2 L \right)  + \sum_{t=0}^{N-1} A_t
\end{align}

Now, choosing $ \alpha_t \leftarrow \dfrac{\sqrt{ B m}}{L \sqrt{N}} , \forall t \geq 0 $ as stated in Theorem \ref{th: main}, and defining $ \beta \triangleq \min_{t} \beta_t $, it can be derived from \eqref{eq: gamma_t} that 
\begin{align}
\beta & = \dfrac{1}{2}\alpha_t - \dfrac{\left( d+ B m-1\right)L}{ B m}  \alpha_t^2 \Bigg|_{\alpha_t =\dfrac{\sqrt{ B m}}{L \sqrt{N}}}, \notag \\
& = \dfrac{\sqrt{ B mN} - 2 \left( d+ B m-1 \right) }{ 2 L N }, \notag \\
& > \dfrac{\sqrt{ B m}}{4L \sqrt{N}}, 
\end{align}
where the inequality is due to the assumption on the number of iterations stated in Theorem \ref{th: main}. 
Therefore, we have
\begin{align}\label{eq: 1/gamma}
\dfrac{1}{\beta} \leq \dfrac{4 L N}{ \sqrt{ B mN}}.
\end{align}

Multiplying $ \dfrac{1}{\beta N} $ on both sides of inequality \eqref{eq: f_tilde_N <= f_tilde_0 + ... with gamma_t}, it follows that
\begin{align}\label{eq: f_tilde_N <= f_tilde_0 + ... with gamma}
\dfrac{1}{N}\sum_{t=0}^{N-1} \mathbb{E} \left[ \parallel \nabla f \left( \boldsymbol{x}^{t} \right) \parallel^2 \right]  \leq & \dfrac{f \left( \boldsymbol{x}^0 \right) - f^\ast}{\beta N} + \mathbf{1} \left(\tau > 1 \right)  \left( \dfrac{\mu^2 L}{\beta \tau} + \dfrac{\mu^2 L}{\beta N} \right)  + \dfrac{1}{\beta N} \sum_{t=0}^{N-1} A_t,
\end{align}
where the left hand side is the desired error term we aim to upper-bound. For this purpose, it suffices to upper-bound the last term on the right hand side. Using \eqref{eq: X_t}, we can write
\begin{align}\label{eq: X_N}
\sum_{t=0}^{N-1} A_t & = \dfrac{ L \sigma^2 }{2 B m} \sum_{t=0, \mod \left( t,\tau \right)=0}^{N-1} \alpha_t^2  + \dfrac{ \mu^2 d^2 L^2 }{4} \sum_{t=0, \mod \left( t,\tau \right) \neq 0}^{N-1} \alpha_t  + \left( \dfrac{d L \sigma^2}{ B m} + \dfrac{\mu^2 d^2 L^3}{4} \right) \sum_{t=0, \mod \left( t,\tau \right) \neq 0}^{N-1} \alpha_t^2 , \notag \\
& \stackrel{\text{(a)}}{=}  \dfrac{ L \sigma^2 }{2 L^2 N} \sum_{t=0, \mod \left( t,\tau \right)=0}^{N-1} 1  + \dfrac{ \mu^2 d^2 L \sqrt{ B m} }{4 \sqrt{N}}  \sum_{t=0, \mod \left( t,\tau \right) \neq 0}^{N-1} 1     + \left( \dfrac{d \sigma^2}{L N} + \dfrac{\mu^2 d^2 L  B m}{4 N} \right) \sum_{t=0, \mod \left( t,\tau \right) \neq 0}^{N-1} 1 ,
\notag \\
\end{align} 
where equality (a) comes from substituting the value of $ \alpha_t $. Moreover, note that the summation terms in 
the right hand side of this equality 
 can be computed 
as follows: 
\begin{align}\label{eq: number of terms of mod=0}
\sum_{t=0, \mod \left( t,\tau \right)=0}^{N-1} 1 = \left \lfloor{\dfrac{N-1}{\tau}}\right \rfloor \leq \dfrac{N}{\tau},
\end{align}
\begin{align}\label{eq: number of terms of mod neq 0}
\sum_{t=0, \mod \left( t,\tau \right) \neq 0}^{N-1} 1 = \left( N -1 - \left \lfloor{\dfrac{N-1}{\tau}}\right \rfloor \right) \leq N - \dfrac{N-1}{\tau} ,
\end{align}
where the inequalities above 
come from the basic properties of the floor function. 
Substituting \eqref{eq: number of terms of mod=0} and \eqref{eq: number of terms of mod neq 0} in \eqref{eq: X_N} concludes that
\begin{align}\label{eq: v2_X_N}
\sum_{t=0}^{N-1} A_t & = \dfrac{ L \sigma^2 }{2 L^2 N}  \left \lfloor{\dfrac{N-1}{\tau}}\right \rfloor+ \dfrac{ \mu^2 d^2 L \sqrt{ B m} }{4 \sqrt{N}} \left( N - 1 - \left \lfloor{\dfrac{N-1}{\tau}}\right \rfloor \right)+ \left( \dfrac{d \sigma^2}{L N} + \dfrac{\mu^2 d^2 L  B m}{4 N} \right) \left( N - 1 - \left \lfloor{\dfrac{N-1}{\tau}}\right \rfloor \right) , \notag \\
& 
 \leq  \dfrac{ L \sigma^2 }{2 L^2 N} \dfrac{N}{\tau}   + \mathbf{1} \left( \tau > 1 \right) \left( N -  \dfrac{N-1}{\tau} \right) \left(\dfrac{ \mu^2 d^2 L \sqrt{ B m} }{4 \sqrt{N}} + \dfrac{d \sigma^2}{L N} + \dfrac{\mu^2 d^2 L  B m}{4 N} \right)  , \notag \\
& = \dfrac{ \sigma^2 }{2 L \tau}  + \mathbf{1} \left( \tau > 1 \right) \left( N \dfrac{\tau-1}{\tau} + \dfrac{1}{\tau} \right) \left(\dfrac{ \mu^2 d^2 L \sqrt{ B m} }{4 \sqrt{N}} + \dfrac{d \sigma^2}{L N} + \dfrac{\mu^2 d^2 L  B m}{4 N} \right) , \notag \\
& = \dfrac{ \sigma^2 }{2 L \tau}  
+ \mathbf{1} \left( \tau > 1 \right)  \Bigg( \dfrac{ \mu^2 d^2 L \sqrt{ B m N} }{4} \dfrac{\tau-1}{\tau}
+ \dfrac{ \mu^2 d^2 L \sqrt{ B m} }{4 \sqrt{N} \tau}
+ \dfrac{d \sigma^2}{L} \dfrac{\tau-1}{\tau} 
+ \dfrac{d \sigma^2}{L N \tau } 
+ \dfrac{\mu^2 d^2 L  B m}{4} \dfrac{\tau-1}{\tau} \notag \\
& \hspace{342 pt}
+ \dfrac{\mu^2 d^2 L  B m}{4 N \tau} \Bigg) . 
\end{align}  

Now,  
combining \eqref{eq: v2_X_N} and \eqref{eq: 1/gamma} with \eqref{eq: f_tilde_N <= f_tilde_0 + ... with gamma},  
it follows that 
\begin{align}\label{eq: v2_f_tilde_N <= f_tilde_0 + ... with gamma}
\dfrac{1}{N} \sum_{t=0}^{N-1} \mathbb{E} \left[ \parallel \nabla f \left( \boldsymbol{x}^{t} \right) \parallel^2 \right] \leq & \dfrac{4 L \left( f \left( \boldsymbol{x}^0 \right) - f^\ast \right) }{ \sqrt{ B m N}}  
+  \dfrac{ 2 \sigma^2 }{\sqrt{ B mN} \tau}  \notag \\
& + \mathbf{1} \left( \tau > 1 \right)  \Bigg( \dfrac{4 \mu^2 L^2 N}{ \sqrt{ B m N} \tau} 
+ \dfrac{4 \mu^2 L^2}{ \sqrt{ B m N} } + \mu^2 d^2 L^2 \dfrac{\tau-1}{\tau}
+ \dfrac{ \mu^2 d^2 L^2 }{N \tau}
+ \dfrac{4 d \sigma^2}{ \sqrt{ B m N} } \dfrac{\tau-1}{\tau} 
  \notag \\
& \hspace{104 pt} + \dfrac{4 d \sigma^2}{ N \sqrt{ B m N} \tau} 
+ \dfrac{\mu^2 d^2 L^2 \sqrt{ B m} }{ \sqrt{N} } \dfrac{\tau-1}{\tau} 
+ \dfrac{\mu^2 d^2 L^2 \sqrt{ B m}}{ N \sqrt{N} \tau} \Bigg), 
\end{align}

Finally, choosing the smoothing parameter to be any value satisfying $ \mu \leq \dfrac{1}{\sqrt{dN}} $ (and consequently, $ \mu^2 \leq \dfrac{1}{d N} \leq \dfrac{1}{d \sqrt{ B mN}} $, where the last inequality comes from the fact that $ N >  B m $, 
resulted from the assumption $ N > \dfrac{16 \left( d+ B m-1 \right)^2}{ B m} $ 
 as stated in Theorem \ref{th: main}), the above inequality can be further bounded as 
\begin{align}\label{eq: v3_f_tilde_N <= f_tilde_0 + ... with gamma}
\dfrac{1}{N} \sum_{t=0}^{N-1} \mathbb{E} \left[ \parallel \nabla f \left( \boldsymbol{x}^{t} \right) \parallel^2 \right]  \leq & \dfrac{4 L \left( f \left( \boldsymbol{x}^0 \right) - f^\ast \right) }{\sqrt{ B m N}}  
+  \dfrac{ 2 \sigma^2 }{ \sqrt{ B mN} \tau} \notag \\
& + \mathbf{1} \left(\tau > 1 \right)  \Bigg( \dfrac{4 L^2}{ d \sqrt{ B m N} \tau} 
+ \dfrac{4 L^2}{ d N \sqrt{ B m N} } 
+ \dfrac{d L^2}{\sqrt{ B mN}} \dfrac{\tau-1}{\tau}
+ \dfrac{ d L^2 }{N \sqrt{ B mN} \tau}
\notag \\
& \hspace{70 pt}
+ \dfrac{4 d \sigma^2}{ \sqrt{ B m N} } \dfrac{\tau-1}{\tau}  
+ \dfrac{4 d \sigma^2}{ N \sqrt{ B m N} \tau}   
+ \dfrac{d L^2 }{ \sqrt{ B mN} } \dfrac{\tau-1}{\tau} 
+ \dfrac{d L^2 }{ N \sqrt{ B mN} \tau} \Bigg), 
\end{align}
which completes the proof of Theorem \ref{th: main}. $ \hfill \blacksquare $

\end{appendices}

\end{document}